\newtheorem{definition}{Definition}
\newtheorem{lemma}{Lemma}
\newtheorem{property}{Property}
\begin{document}

\title[ED-Filter]{ED-Filter: Dynamic Feature Filtering for Eating Disorder Classification}

\author*[1]{\fnm{Mehdi} \sur{Naseriparsa}}\email{m.naseriparsa@federation.edu.au}

\author[2]{\fnm{Suku} \sur{Sukunesan}}\email{s.sukunesan@ecu.edu.au}

\author[1]{\fnm{Zhen} \sur{Cai}}\email{zhenc1997@gmail.com}

\author[3]{\fnm{Osama} \sur{Alfarraj}}\email{oalfarraj@ksu.edu.sa}

\author[3]{\fnm{Amr} \sur{Tolba}}\email{atolba@ksu.edu.sa}

\author[4]{\fnm{Saba Fathi} \sur{Rabooki}}\email{saba.fathi.rabooki@student.rmit.edu.au}
 
\author[4]{\fnm{Feng} \sur{Xia}}\email{f.xia@ieee.org}

\affil[1]{Institute of Innovation, Science, and Sustainability, Federation University Australia, Ballarat, VIC 3353, Australia}

\affil[2]{School of Business and Law, Edith Cowan University, Melbourne, VIC 3000, Australia}

\affil[3]{Computer Science Department, Community College, King Saud University, Riyadh 11437, Saudi Arabia}

\affil[4]{School of Computing Technologies, RMIT University, Melbourne, VIC 3000, Australia}


\abstract{Eating disorders (ED) are critical psychiatric problems that have alarmed the mental health community. Mental health professionals are increasingly recognizing the utility of data derived from social media platforms such as Twitter. However, high dimensionality and extensive feature sets of Twitter data present remarkable challenges for ED classification. To overcome these hurdles, we introduce a novel method, an informed branch and bound search technique known as ED-Filter. This strategy significantly improves the drawbacks of conventional feature selection algorithms such as filters and wrappers. ED-Filter iteratively identifies an optimal set of promising features that maximize the eating disorder classification accuracy. In order to adapt to the dynamic nature of Twitter ED data, we enhance the ED-Filter with a hybrid greedy-based deep learning algorithm. This algorithm swiftly identifies sub-optimal features to accommodate the ever-evolving data landscape. Experimental results on Twitter eating disorder data affirm the effectiveness and efficiency of ED-Filter. The method demonstrates significant improvements in classification accuracy and proves its value in eating disorder detection on social media platforms.}

\keywords{Eating Disorder, Mental Health, High-Dimensional Data, Feature Selection, Social Media, Classification}



\maketitle

\section{Introduction}\label{sec1}

Eating disorders (ED) have emerged as a profound concern in contemporary mental health research~\cite{barakat2023risk, schneider2023mixed, rivera2022diagnosis}. A growing body of literature attests to the alarming statistic that over 20 percent of Australian adolescents experience some manifestation of eating disorder~\cite{phillipou2020eating, MitchisonJKSANKSP19}. Conditions such as Anorexia Nervosa carry pronounced life-threatening implications, while others may precipitate prolonged mental and physiological complications~\cite{frank2019neurobiology, zhang2021predicting}. Coinciding with the digital age, platforms like Twitter (officially known as X since 2023) have seen a proliferation of communities, termed as \textit{pro-eating disorder} (Pro-ED), which disseminate content promoting and idealizing these disorders~\cite{ARSENIEVKOEHLER16, vall2021impact, HOLLAND16}. Such communities harness the digital milieu to endorse an ethos of thinness and rigorous dieting, inadvertently guiding susceptible populations towards maladaptive eating practices~\cite{prieto2014twitter,ZHANG2020100145}.

However, existing ED classification methods grapple with the multidimensionality and extensive feature set intrinsic to Twitter data. Such complexities not only pose challenges in data extraction but also lead to performance inefficiencies in ED classification, attributable to the high dimensionality and the presence of unreliable features~\cite{XiaAccess2013}. Twitter, emblematic of vast social media platforms, offers a plethora of information that could yield invaluable insights into the dynamics of Pro-ED discourses~\cite{abuhassan2023classification, AnwarFJASTS22}. Yet, the inherent intricacies in this data necessitate an advanced mechanism for its curation, management, and transformation~\cite{ZHOUYRAR19, TIGGEMANN18}. Preliminary identifiers like hashtags $\#proana$ and $\#thinspiration$ might serve as initial markers~\cite{ChancellorPCGC16, alberga2018fitspiration}, but the overarching challenge remains: developing a robust analytical framework that can efficiently navigate the multidimensional space, ensuring accurate ED classification while filtering out unreliable features.

\textbf{Example}: Assuming we have two users who post ED-related messages on Twitter, with the following keywords to check within their posts $\mathcal{K} = \{body, weight, food, meal, exercise, thinspo, suicide, depressed\}$. We then count the occurrences of each keyword within the users' tweets and store these values in separate data rows for each user as follows: (a) $T_1(2,1,0,0,0,0,0,0)$, and (b) $T_2(0,0,5,3,0,0,0,0)$. Assuming we have four groups: $\{\textit{body-image}, food, inspiration, symptoms\}$ for classification. The relationship between the four groups and the keywords in $\mathcal{K}$ are as follows: $\{\textit{body-image}: body, weight\}$, $\{food: food, meal\}$, $\{inspiration: exercise, thinspo\}$, and $\{symptoms: suicide, depressed\}$. Finally, we can classify the $T_1(2,1,0,0,0,0,0,0)$ into $\textit{body-image}$ and $T_2(0,0,5,3,0,0,0,0)$ into $food$.      

The eating disorder data is high-dimensional because the number of keywords $|\mathcal{K}|$ are usually high, i.e., $|\mathcal{K}| \geq 20$. Specifically, the classification of Twitter eating disorder data is extremely unreliable due to the high dimensional features. Some features (keywords) within the data would deteriorate the classification accuracy. To improve the classification accuracy of eating disorders, we propose a dimensional reduction mechanism on high-dimensional data. In this case, feature selection techniques are helpful to discover the most reliable features~\cite{JovicBB15, liu2019shifu2}. It can also reduce the data to address the accuracy problem in Twitter eating disorder data.    


Feature selection is an important pre-processing step for classification tasks, which is frequently adopted to reduce the high dimensional data~\cite{zhang2024feature, tu2023deep, nguyen2023review}. The feature selection improves the performance of the classification models by removing irrelevant features~\cite{JiangZLW19, VallejoTEOC20}. In feature selection, we aim to compute the optimal feature subset which maintains the characteristics of the original data. When the dimensionality of a problem grows, the feature set becomes larger. As a result, computing the optimal feature subset becomes intractable and NP-hard~\cite{milan2017data}. In summary, the feature selection contains four steps: (a)~subset creation, (b)~subset evaluation, (c)~termination condition check, and (d)~result assessment. Feature selection has numerous applications such as in text mining~\cite{ Al-ShammariLNVA17}, data exploration~\cite{NaseriparsaILM18, NaseriparsaILC19, NaseriparsaLIZ19}, intrusion detection~\cite{WANG2020101645}, image classification~\cite{SongZHJ16}, bio-informatics~\cite{HiraG15,  Al-Shammari0LNV18}, and so on.

The feature selection techniques are mainly categorized into two important groups: (a)~the filters~\cite{bommert2020benchmark} and (b)~the wrappers~\cite{liu2022feature}. The wrappers employ the predictive model performance and cross-validation to rank the features and select the most reliable feature set. The wrappers employ a particular classifier to assess the merit of the feature subsets and to compute the optimal feature subset. Therefore, the wrappers are more computationally expensive and slower than the filters. Furthermore, the wrappers guarantee better effectiveness than the filters since they assess the accuracy of the classifier to determine the optimal feature subset. However, the filters apply statistical techniques such as information gain and variance to rank the features. The filters utilize the internal information of a dataset to compute the relevance degree of the features to select the most promising feature subset. These filter methods avoid the optimization process which is done in the wrappers techniques. That is why the filters are faster and less costly in comparison with their wrappers counterparts. However, the filters accuracy in feature ranking is not as reliable as the wrappers. There are some methods that take advantage of combining filtering with wrappers. In such methods the filtering is applied before performing the wrapper to alleviate the computational costs of the wrapper~\cite{AbediniaNH17}. 

A group of feature selection methods is based on Bayesian networks to weight the features for filtering purposes~\cite{TangKH16}. Other groups of feature selection methods utilize the correlation and information theory to compute the weight of the features and to perform the filtering task \cite{zhang2022scstcf, GeZLMMMWZ16, VallejoTEOC20}. For instance, Sun et al. \cite{SunWQXZ19} proposed feature selection for incomplete data by using the Lebesgue and entropy measures. They claimed the method handled mixed and incomplete datasets and maintained the original classification information.

While the wrappers are suitable for static data with reasonable volume, they are impractical in dynamic data environments with huge data volumes. To this end, we propose \textit{ED-Filter} for reducing the feature dimensionality in the Twitter eating disorder classification problem. We design a branch \& bound algorithm that performs an informed search with a termination condition to check the features' eligibility for joining the optimal feature subset, which maximizes the classification accuracy. We have incorporated a greedy strategy into the informed branch and bound search algorithm to accelerate the exploration of ED's reliable feature set. As a result, the \mbox{greedy-based} search method generates a sub-optimal ED feature subset, which leads to more efficient pruning of non-promising branches during the exploration process. This sub-optimal ED feature subset maintains satisfactory effectiveness for the classification of eating disorders while significantly reducing exploration time. Furthermore, we utilize deep-learning techniques to simplify the computing process of the ED feature subset. By combining deep learning with a greedy-based search approach, we focus only on the most promising section of the search space and quickly ignore the less promising parts. We employ a multi-layer perceptron model to design our deep learning technique and learn the appropriate feature subset size; thus, we concentrate on feature subsets of the same sizes. Our multi-layer perceptron model has two hidden layers to identify the most promising part of the search space. We train the model using Twitter data to predict the optimal size of the ED feature subset. This hybrid greedy-based deep-learning search approach effectively initiates the exploration task in the most promising search space; therefore, it is well-suited for the dynamic classification of eating disorders where the Twitter data streams arrive frequently and are modified numerously.  

Our contributions are summarized as follows:

\begin{itemize}
    \item We address the eating disorder classification problem by focusing on the \mbox{high-dimensional} data issue.  
    \item We propose a feature selection method called ED-Filter which utilizes a branch \& bound algorithm to reduce high-dimensional eating disorder data efficiently. 
    \item We devise a hybrid greedy-based deep learning solution based on multi-layer perceptron model to select the most reliable features for eating disorder classification problem on dynamic Twitter data efficiently.   
    \item We conduct experiments on Twitter data to verify the efficiency and effectiveness of our proposed method and to compare it with state-of-the-art feature filtering methods.
\end{itemize}


This paper is organized as follows: firstly, we review the related work in Section \ref{Sec:Related Work}. Then, we define our problem in Section \ref{Sec:ProblemDefinition}. We present our proposed ED-Filter method in Section \ref{Sec:ED-Filter}. In Section \ref{Sec:ExactAlgorithm}, we discuss the exact algorithm which produces the optimal solution. Section \ref{Sec:GreedySuboptimal} presents the greedy-based suboptimal solution. In Section \ref{Sec:HybridDeppLearning} we discuss the hybrid greedy deep-learning method. Section \ref{Sec:ExperimentalAnalysis} presents the experimental analysis and results. Finally, we conclude the paper in Section \ref{Sec:Conclusion}.    

\section{Related Work}
\label{Sec:Related Work}
Social media data analytics is widely used by researchers to address various mental disorder issues~\cite{Kim2021}. For instance, it can be applied to examine drug abuse prescriptions~\cite{Raza2023, Sarker2020}, identify signs of depression~\cite{Babu2022-nz}, combat anxiety disorders~\cite{AHMED2022100066}, assess public sentiment regarding vaccination~\cite{RAHMANTI2022106838}, and explore eating disorders~\cite{abuhassan2023classification, AnwarFJASTS22}. In this section, we first review recent studies related to eating disorders. We then review the feature filtering techniques for high-dimensional eating disorder data.

\subsection{Eating Disorders}
Eating disorders are critical psychiatric problems which promote radical weight management behaviours~\cite{mcclure2023predictors, AndradesARBMVO21}. Marie et al.~\cite{GALMICHE19} reviewed the prevalence of the different EDs and explored their evolution. In their review, they confirmed that EDs are prevalent globally, especially in females. The ED-related behaviors are often promoted by special communities called (Pro-ED) on social media platforms by sharing various ED-related messages and images~\cite{FergusonMGG14}. Ferguson et al.~\cite{FergusonMGG14} showed that these communities exchange the ED-related plans, including dieting, exercise and images of extremely thin bodies. Dodzilo et al.~\cite{Dondzilo2024} conducted a study to investigate the potential association between engagement with appearance/eating-related TikTok content and ED symptoms. They showed how the content engagement on TikTok is linked to targeted exposure, which leads to ED symptoms.
Dodzilo et al.~\cite{DONDZILO2024101923} experimentally identified that the exposure to social media exacerbates the ED symptoms and the participants who limited their social media experienced a reduction in ED symptoms. \mbox{Rodgers et al.}~\cite{Rodgers2024} discussed the importance of social media literacy to combat the harmful effects of appearance-focused photo-based content related to the promotion of radical behavior that results in developing eating disorder symptoms. Therefore, harnessing the content for intervention and developing social media skills among people would be necessary to contain the harmful effects of eating disorder content on social media.


The dynamics and structure of Pro-ED communities have been widely studied by the literature to highlight the complexities and mechanisms in which these communities work on social networking platforms such as Twitter~\cite{ARSENIEVKOEHLER16, ZHOUYRAR19, TIGGEMANN18}. 
\mbox{Arseniev-Koehler et al.}~\cite{ARSENIEVKOEHLER16} studied the Pro-ED profiles on Twitter to explore the social connections between Pro-ED communities. Zhou et al.~\cite{ZHOUYRAR19} applied the Correlation Explanation (CorEx) topic model on Twitter data to identify ED-related topics. They identified twenty topics and group them into eight categories. \mbox{Tiggemann et al.}~\cite{TIGGEMANN18} compared two contemporary ED-related communities, including thinspiration and fitspiration on Twitter. They conducted sentiment analysis on the Tweets collection and showed that fitspiration tweets were more positive in terms of sentiment. \mbox{Anwar et al.}~\cite{AnwarFJASTS22} proposed a comprehensive ED lexicon, called EDBase to facilitate the content analysis tasks within the ED-related conversations by using Twitter data. The proposed EDBase contains a full list of high-quality ED terms with an ED score that is linked to their parent terms. Twitter is known for its micro-blogging property which is popular among young people~\cite{DuitsKBAG23}. The users utilize Twitter by creating a profile and posting microblogs called ``tweets''. A tweet is a short text that other users may share. Moreover, the users can follow other users and build a social network. Twitter platform plays a pivotal role in exchanging information and facilitating online social activity by providing users with the option of posting various files such as images, videos, or web links. The users are allowed to add keywords or hashtags to the existing tweet to create a larger conversation~\cite{ARSENIEVKOEHLER16, ZHOUYRAR19, TIGGEMANN18}. \mbox{Benítez-Andrades et al.}~\cite{Ben_tez_Andrades_2023} proposed a hybrid framework that combines contextual knowledge from multiple types of data sources, including unstructured and structured texts. They utilize the BERT to generate vector embedding from structured data, and employ Wikidata to generate knowledge graph-based embeddings. As a result, they develop a predictive model for a typical classification task. Finally, they evaluated their model by a corpus of eating disorders Tweets.

\subsection{Feature Selection}
Pro-ED communities employ Twitter platform to share experience and encourage radical ED-related behavior by creating ED-related hashtags and posting the corresponding tweets~\cite{di2023methodologies, TIGGEMANN18}. However, Twitter data collection and analysis is challenging due to enormous amount of generated data by the pro-ED communities~\cite{AnwarFJASTS22, AndradesARBMVO21}. Moreover, many machine learning tasks on Twitter eating disorder data focus on classification~\cite{AndradesARBMVO21, ZHOUYRAR19}. The Twitter eating disorder data classification often faces the high dimensional space issue due to the high number of keywords and hashtags.  

Therefore feature selection, a well-established dimensionality reduction technique, is crucial to improve the classification performance~\cite{cinar2023novel, DeviS18}. As mentioned previously, there are two significant feature selection techniques in literature: a)~filters~\cite{bommert2020benchmark} and b)~wrappers~\cite{liu2022feature}. Filter feature selection utilizes the intrinsic characteristics of a dataset, such as information gain and variance, to rank the features, which is quick. For example, Arya and Gupta~\cite{AryaG23} proposed a filter-based feature selection approach based on a combination of four popular filter-based feature selection techniques, including ANOVA, Pearson Correlation Coefficient, Mutual Information, and Chi-Square~(CS). However, the filters often ignore the learning model during the feature selection process; therefore, they may perform unsatisfactorily. On the other hand, wrappers utilize the predictive model performance
to rank the features, which is computationally expensive. For example, Seghir et al.~\cite{SeghirDSC23} proposed a wrapper-based feature selection for medical diagnosis which combines Binary Teaching-Learning Based Optimization (BTLBO) algorithm with the K-Nearest Neighbor (KNN) classifier to explore the optimal features subset. In this paper, our focus is to propose a dynamic feature selection technique to select the most
reliable features for dynamically generated Twitter eating disorder data. Our feature selection approach is a hybrid technique that employs a greedy-based deep learning technique to facilitate the dynamic feature selection in Twitter eating disorder data.

\section{Problem Definition}
\label{Sec:ProblemDefinition}
To create and analyze the eating disorder data for classification, we first collect the Tweets (raw data) from the users who post messages about eating disorders. 

\begin{definition}\textbf{Eating Disorder Raw Data}
Eating disorder raw data from Twitter is presented as $\mathcal{D} = \{D_1,D_2,...,D_n\}$ where $D_i \in \mathcal{D}$ represents the collection of eating disorder-related Tweets that are posted by a user, i.e., $D_1=\{p_1,p_2,...,p_n\}$. Each $p_i \in D_1$ is a post from a particular user that directly contains keywords or hashtags related to eating disorders.  
\end{definition}

Moreover, there are a number of features (dimensions) that convey meaning with respect to eating disorder symptoms. These features are represented by keywords or hashtags within the users' Tweets. Thus, we analyze the Twitter messages and count the number of times each feature appeared within the users' Tweets. Finally, we come up with an eating disorder data structure which is presented as follows:   

\begin{definition}\textbf{Eating Disorder Data Row}
Eating disorder data row from Twitter is presented as $T(F;y)$ where $F=\{f_1,f_2,...,f_n\}$ is a set of eating disorder features and $y$ is the corresponding eating disorder type.  
\label{def:ed_data_row}
\end{definition}

For example, $T_1(5,1,0,3,0,0,0,8,0,0;0)$ is a Twitter data row that contains ten eating disorder features $F_1=\{5,1,0,3,0,0,$ $0,8,0,0\}$ and the corresponding eating disorder type $y_1=0$. To differentiate between multiple features within the Twitter eating disorder data, we should estimate the importance of the individual features. Therefore, we adopt the established TF-IDF technique~\cite{Kang2024} and estimate the importance of the individual features by their frequencies as follows:  

  
 

\begin{table}
\centering
\caption{An Example of Twitter Eating Disorder Data $\mathcal{T}$. Each row represents a data row (as formulated in definition~\ref{def:ed_data_row}), corresponding to one user.}
\scriptsize
\begin{tabular}{ |c|c|c|c|c|c|c|c|c|c|c|c|c|c|c|c|c| }
  
  \hline
     $f_1$ & $f_2$ & $f_3$ & $f_4$ & $f_5$ & $f_6$ & $f_7$ & $f_8$ & $f_9$ & $f_{10}$ & $f_{11}$ & $f_{12}$ & $f_{13}$ & $f_{14}$ & $f_{15}$ & y \\
  \hline
  1 & 0 & 0 & 0 & 0 & 0 & 0 & 0 & 0 & 0 & 0 & 0 & 0 & 0 & 0 & 0  \\
  \hline
  0 & 1 & 0 & 0 & 0 & 0 & 0 & 1 & 0 & 0 & 0 & 0 & 1 & 0 & 0 & 1 \\
  \hline
  26 &	22 & 0 & 0 & 25 & 0 & 0 & 3 & 0 & 0 & 0 & 0 & 29 & 0 & 0 & 2 \\
  \hline
 1 & 39 & 0 & 0 & 0 & 0 & 0 & 0 & 0 & 0 & 0 & 0 & 0 & 0 & 0 & 3 \\
 \hline
 
\end{tabular}
\label{tbl:TwitterDataSamples}
\end{table}

\begin{definition}\textbf{Feature Weight}
For each data row $T$, the feature $f_i \in F$ within the Twitter data row is assigned a weight. This weight is computed by the summation of the frequencies of a set of keywords $\{k_1,k_2,...,k_n\}$ that are appeared in the users tweets as follows:

\begin{equation}
    f_i = \sum_{j=1}^n freq(k_j)
\end{equation}
where $freq(k_j)$ returns the number of times that $k_j$ appeared in the user tweets.  
\end{definition}

The Twitter eating disorder data $\mathcal{T}$ is generated by collection of Twitter eating disorder data rows, i.e, $\mathcal{T}=\{T_1,T_2,...,T_n\}$.   

\textbf{example}. Table \ref{tbl:TwitterDataSamples} presents an example of the Twitter eating disorder data $\mathcal{T}$. It represents four data rows from four users. Our Twitter data is classified in four categories, i.e., $y \in \{0,1,2,3\}$. Each row in Table \ref{tbl:TwitterDataSamples} summarizes the statistics of the eating disorder Tweets that are posted by a particular user in Twitter.

However, the number of features is usually even larger than $10$, i.g., $15$, which is relatively high. This high number leads to deteriorating the classification accuracy with respect to eating disorder types in Twitter data $\mathcal{T}$. To address the high dimensionality problem within the eating disorder data, we apply a well-established feature selection method to extract the most reliable features more effectively.    

\begin{definition} \textbf{Feature Selection}.
Given a feature set $F = \{f_1, f_2,$ $..., f_n\}$, we want to reduce the feature set to $F^{'} = \{f_1, f_2,..., f_m\}$ where $m < n$.
\end{definition}

As mentioned, the main goal of feature selection is to improve the eating disorder classification accuracy. Here, we define our measure of accuracy $\theta$ to assess the feature selection effectiveness. Assume $Y$ is a set of eating disorder types for data rows, i.e., $Y= \{y_1,y_2,...,y_n\}$. Then, the classification accuracy of a set of features $F^\prime \subset F$ concerning the Twitter data $\mathcal{T}$ is defined as $\theta(F^\prime,Y)$. 

\begin{equation}
\theta(F^\prime,Y) = \frac{TP(F^\prime,Y)}{TP(F^\prime,Y) + FP(F^\prime,Y)}    
\end{equation}
where $TP(F^\prime,Y)$ denotes the number of cases that are correctly diagnosed as their respective eating disorder type $Y$ by having the feature set $F^\prime$ while $FP(F^\prime,Y)$ denotes the number of cases that are incorrectly diagnosed as eating disorder type $Y$ by having the feature set $F^\prime$. 

\textbf{Problem Statement}. Suppose $\mathcal{F} = \{F_1,F_2,...,F_n\}$ is the super set that contains all possible feature subsets within the feature space in eating disorder data $T(F,Y)$. Then, we like to find the optimal feature subset $F^* \in \mathcal{F}$ within the eating disorder data such that $\forall F_i \in \mathcal{F}$, $\theta(F^*,Y) \geq \theta(F_i,Y)$. 

Since the eating disorders data from Twitter arrive frequently, the optimal feature subset is subject to change; thus, the system should apply the feature subset extraction process numerously. The feature extraction incurs heavy computational costs to the system. To alleviate the exponential costs, we propose a hybrid deep learning greedy-based feature subset extraction algorithm. The hybrid algorithm skips many unnecessary computations and improves the feature selection performance of the dynamic Twitter eating disorder data streams.

\section{The Proposed ED-Filter Method}
\label{Sec:ED-Filter}

In this paper, we propose FilterBoost that is designed in two phases: (a) the feature ranking phase that ranks the feature with respect to their relevance for the medical diagnosis, and (b) dimensionality reduction based on subset selection from $F_{D}$ in multiple iteration. In each iteration, if the subset reduction leads to improve the medical diagnosis, we update the feature set. Otherwise, we add the reduced subset to the feature set and go to the next iteration.

Fig.~\ref{fig:SystemArchitecture} presents the ED-Filter architecture for reducing the high-dimensional eating disorder data. The newly arrived data from Twitter is stored in the database dynamically. Due to frequent modification of the eating disorder classification data, we employ a feature selection technique that combines the wrappers with filters called ED-Filter. That is because the wrappers incur intensive computational costs and long response times, whereas the filtering methods are computationally less expensive. Thus, we combine them in ED-Filter to efficiently work on the evolving Twitter eating disorder data to extract reliable features. After the feature selection method extracts the reliable features, we remove the irrelevant features. Therefore, we utilize reliable features to generate an effective eating disorder diagnosis model for classification tasks.

\begin{figure}[h]
 \centering
	  \includegraphics[scale=0.8]{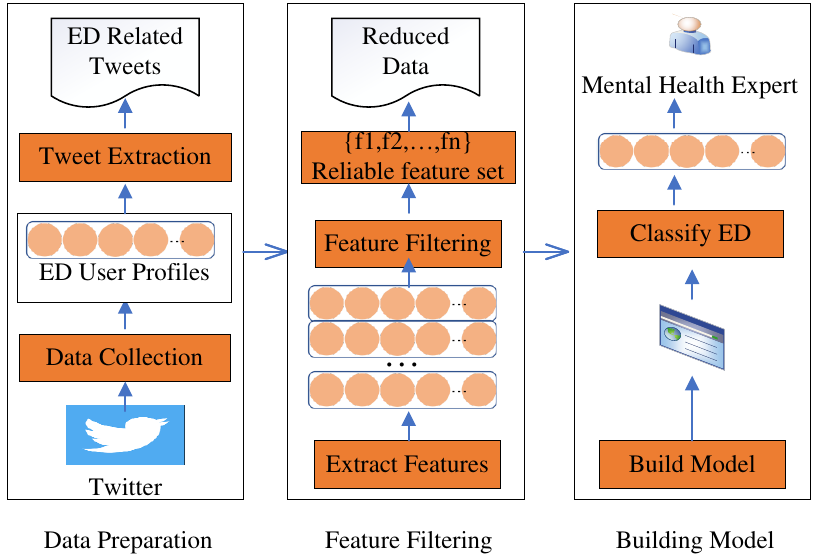}
		\caption{The architecture of ED-Filter.}
		\label{fig:SystemArchitecture}
\end{figure}

The ED-Filter is a feature selection method designed for the high-dimensional eating disorder data of Twitter. The ED-Filter utilizes informed search and deep learning to explore the reliable feature subset with the best accuracy. 

Feature filtering techniques utilize statistical methods such as information gain~\cite{venkatesh2019review} to rank the features of high-dimensional data. The feature ranking is necessary for detecting the most relevant features. Thus, the low-ranking features below a threshold are considered irrelevant and removed. Feature filtering reduces the data dimensionality and improves the eating disorder diagnosis by removing non-promising features. However, the feature filtering methods are not accurate. For example, they sometimes remove reliable features that affect the classification accuracy negatively. To address this weakness of feature filtering techniques, we take the benefit of wrappers and propose a boosting mechanism called \textit{ED-Filter} (Eating Disorder Filter) to improve the feature selection accuracy. 

\begin{figure}[h]
 \centering
	  \includegraphics[scale=0.8]{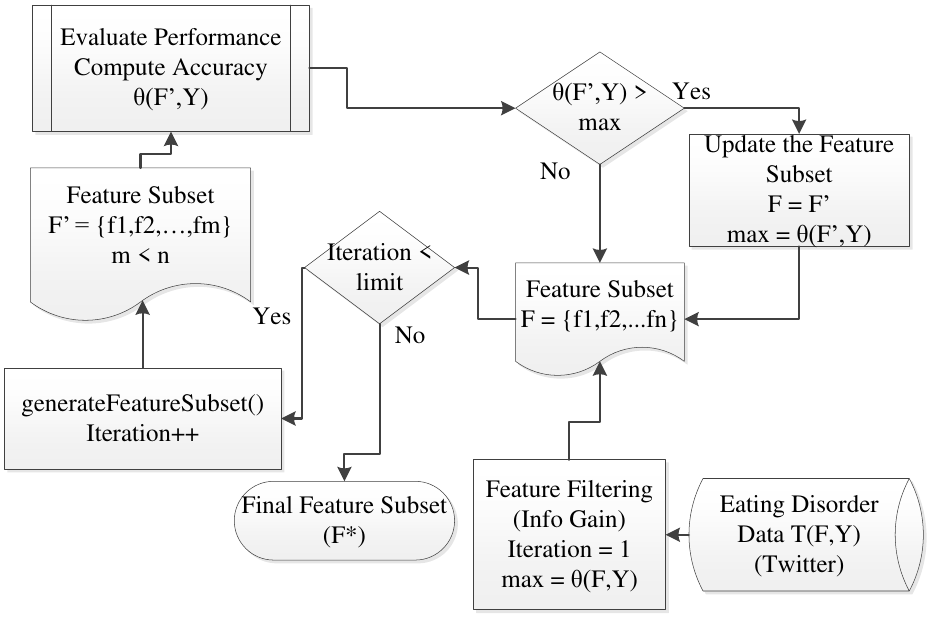}
		\caption{The steps in ED-Filter.}
		\label{fig:FilterBoost}
\end{figure}

Figure~\ref{fig:FilterBoost} presents the steps of the ED-Filter method. The ED-Filter contains two phases: (a)~ranking the features and (b)~iteratively assessing subsets of eating disorder data features to achieve the best classification accuracy. From the Figure, we use the Information Gain filtering technique to rank the features of the eating disorder data in the first step. Then, we generate a feature subset $F^\prime \subset F$. ED-Filter computes the eating disorder classification accuracy $\theta(F^\prime,Y)$ to identify the merit of the feature subset $F^\prime$. If the feature subset merit is better than the previous step, we update the feature set. Otherwise, we keep the feature set and choose another subset to assess. These steps continue until we have reached the iteration limit when we choose the current feature set to build the eating disorder classification model. 

The iteration limit is a tuning parameter that directly affects the performance when the model works on dynamic incoming data. For example, when the initial data contains $n$ features, the maximum iteration limit is $(2^n - 1)$. That is because all feature subsets are evaluated in terms of their accuracy. Although the classic feature filtering methods are suitable for a static environment, Twitter eating disorder data are produced and modified frequently. In this case, we should adapt the feature filtering process to work in a dynamic environment. Thus, we should consider the performance of feature filtering for the dynamic eating disorder data. To this end, we propose an informed branch \& bound search algorithm that performs a search on the feature space of the eating disorder data to find the optimal features. Moreover, we further improve the performance by using a greedy-based branch \& bound algorithm that retrieves a sub-optimal feature subset. The greedy method utilizes the deep learning technique to skip many feature subsets; thus, it only focuses on the most promising feature subsets.   

\subsection{Ranking the Features}
In the first phase of the proposed ED-Filter, we apply a feature filtering technique to rank the data features. The feature filtering methods utilize statistical techniques to rank the features against the data. In this paper, we employ Information Gain, an established filtering technique for feature ranking. However, our proposed ED-Filter is flexible, and we can adopt other statistical techniques instead. The Information Gain filtering utilizes the Entropy concept to compute the rank of features. The Entropy measures the feature unpredictability or surprise level. For example, when the Entropy increases, the feature produces more uncertain or surprising outcomes. Equation \ref{eq:Entropy} presents the entropy formula for the class feature $Y$. In the following formula, $p(y)$ is the probability density function for the class feature $Y$. 

\begin{equation}
H(Y) = -\sum_{ y \in Y} {p(y) \log_2(p(y))}. 
\label{eq:Entropy}
\end{equation}

Equation \ref{eq:EntropyCond} presents the entropy of class feature $Y$ after observing the feature $X$. In the following formula, $p(y|x)$ is the conditional probability of $y$ given $x$. 

\begin{equation}
H(Y|F) = -\sum_{ f \in F} {p(f) \sum_{y \in Y}p(y|f)\log_2(p(y|f))}. 
\label{eq:EntropyCond}
\end{equation}

The Information Gain (IG) metric computes the dependency level between two features. For our filtering method, the IG computes the statistical dependence between the class feature $Y$ and other feature, i.e., $F_i \in \mathcal{F} = \{F_1, F_2,..., F_n\}$ to rank the feature. That means the IG score of $F_i \in \mathcal{F}$ is the average reduction in uncertainty about a value from $F_i$ that results from learning a value from class feature $Y$. Equation~\ref{eq:IG} presents the IG formula where $H(Y)$ is the entropy for class feature $Y$ and $H(Y|F)$ is the conditional entropy of $Y$ given the feature $F$.      
\begin{equation}
IG(Y;F) = H(Y) - H(Y|F). 
\label{eq:IG}
\end{equation}

\subsection{Iterative Feature Reduction}
In the second phase of the ED-Filter method, we take the benefit of wrappers to iteratively remove a subset of irrelevant features from the feature set, i.e., $F^{\prime\prime} =\nobreak F \setminus F^{\prime}$. We assess the classification performance of a feature set to determine the irrelevant features in each round. For example, after removing $F^\prime$ from $F$, we assess the eating disorder classification model on the new feature set $\theta(F^{\prime\prime},Y)$. If the performance improves (this means that the removed features were irrelevant), we update the feature set with the newly reduced feature set, i.e., $F = F^{\prime\prime}$. Otherwise, we keep the feature set and move to the next iteration. The performance of the eating disorder classification model is determined by classification accuracy. 


\subsection{Feature Cardinality Detector}
The feature selection process continues until we reach the iteration limit. If we leave the iteration limit to a large number, the feature selection performance deteriorates on Twitter data because the data is produced and modified frequently. To effectively reduce the iteration limit, we design a feature cardinality detector. The detector predicts the cardinality of the feature subset so that the filtering process focuses on producing the best feature subset with a given cardinality. To design and develop the cardinality detector, we employ a neural network model, a multi-layer perceptron (MLP), to detect the cardinality of the feature subset. We prepare a training set for the neural network to build a model for detecting the cardinality of the feature subset to speed up the feature selection process. The training data contain the full feature set $F$ and the number of selected features $|F^\prime|$, which is used to build the deep learning model to predict the feature subset cardinality for the ED-filter. We embed the cross-validation technique in developing the cardinality detector component. For instance, we break down the Twitter data into smaller and similar chunks and feed the neural network with these chunks. We iterate the training phase and replace the validation set until the model is trained and validated by all chunks.     

\section{Branch \& Bound Search Algorithm}
\label{Sec:ExactAlgorithm}
The brute force solution for detecting the most reliable features on dynamic Twitter data generates all possible feature subsets; then, it ranks them based on the accuracy and identifies the optimal subset $\mathcal{F}^*$ with the maximum accuracy. However, this solution is practically impossible due to combinatorial explosion. That is because we should generate a feature power set which contains up to $2^n - 1$ subsets. Moreover, we should measure the accuracy of each element in the power set, which is exponential. Therefore we propose an informed branch and bound search to skip non-promising feature subsets quickly. To design the informed branch and bound search, we need to estimate the upper bound of the classification accuracy. Since we employ information gain for filtering the feature space, we identify the relationship between classification accuracy $\theta(F,Y)$ and information gain $IG(Y;F)$ as follows:  


\begin{equation}
    \log{(n)} - H - (1 - \theta)\log{(n')} \leq IG(Y; F).
\label{eq:upperboundshort}    
\end{equation}

In this equation, $H$ denotes $H_2(\theta(F,Y))$, which is the binary entropy function. $\theta(F,Y)$ is simplified as $\theta$, represent the classification accuracy of feature set $F$ with given class $Y$. $(1 - \theta)\log{(n')}$ represents the product of $(1 - \theta)$ and the logarithm of $n'$, where $n'$ is a stand-in for $(n - 1)$. $IG(Y; F)$ represents the information gain score of feature set $F$ and class $Y$. We then use equation~\ref{eq:upperbound} to find the upper bound for accuracy given the mutual information for a feature Y as follows:    

\begin{lemma}
The upper bound value of the classification accuracy for a set of features $F=\{f_1,f_2,...,f_n\}$ based on its mutual information is computed as follows: 
    \begin{equation}
    \overline{\theta}(F,Y) = \frac{IG(Y;F) - \log(n) + 1 }{\log(n-1)} + 1.   
    \label{eq:upperbound}    
    \end{equation}
\end{lemma}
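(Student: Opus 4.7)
The plan is to recognize equation~\ref{eq:upperboundshort} as essentially a rearrangement of Fano's inequality and then to solve it for $\theta$ after applying a crude bound on the binary entropy term. Concretely, I would first remind the reader that Fano's inequality for an $n$-class prediction problem with class label $Y$ given the observed features $F$ states
\begin{equation*}
H(Y\mid F) \;\leq\; H_2(P_e) \;+\; P_e \log(n-1),
\end{equation*}
where $P_e = 1-\theta(F,Y)$ is the misclassification probability. Using the identity $IG(Y;F)=H(Y)-H(Y\mid F)$ and assuming, as is conventional for this type of bound, that $Y$ is (at most) uniform over its $n$ classes so that $H(Y)\le \log n$, this rearranges to exactly the inequality given just before the lemma, namely
\begin{equation*}
\log(n) - H - (1-\theta)\log(n-1) \;\leq\; IG(Y;F),
\end{equation*}
with $H=H_2(\theta)$ and $n' = n-1$.

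Next I would invoke the standard fact that the binary entropy function satisfies $H_2(\theta)\le 1$ for all $\theta\in[0,1]$, which yields the weakened but algebraically tractable inequality
\begin{equation*}
\log(n) - 1 - (1-\theta)\log(n-1) \;\leq\; IG(Y;F).
\end{equation*}
From here it is a routine manipulation: isolate the $(1-\theta)\log(n-1)$ term, divide by $\log(n-1)$ (valid since $n\geq 2$), and rearrange to obtain
\begin{equation*}
\theta \;\leq\; 1 + \frac{IG(Y;F) - \log(n) + 1}{\log(n-1)} \;=\; \overline{\theta}(F,Y),
\end{equation*}
which is the desired closed-form upper bound.

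The main obstacle is not the algebra but the two implicit modelling assumptions that need to be flagged explicitly for the bound to be meaningful: (i) that $H(Y)$ can be replaced by $\log n$, which requires $Y$ to be uniform (or we must interpret $\overline{\theta}$ as a bound conditioned on the balanced-class case), and (ii) that replacing $H_2(\theta)$ by $1$ is a safe upper-bound step here, i.e.\ that it weakens the inequality in the correct direction so that the resulting $\overline{\theta}$ remains a valid upper envelope on $\theta$. I would devote a short remark to verifying the direction of each substitution, since a sign error at either step silently turns the bound into a lower bound rather than an upper bound. Once those two points are nailed down, the rest is a one-line algebraic solve.
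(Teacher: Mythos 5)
Your proposal is correct and follows essentially the same route as the paper's own proof: rearrange the pre-lemma inequality (eq.~\ref{eq:upperboundshort}) to isolate $\theta$, replace the binary entropy term $H_2(\theta(F,Y))$ by its maximum value $1$ (which only enlarges the right-hand side since $\log(n-1)>0$), and read off $\overline{\theta}(F,Y)$. The only difference is that you additionally derive equation~\ref{eq:upperboundshort} from Fano's inequality and flag the uniform-$Y$ assumption behind replacing $H(Y)$ with $\log n$, details the paper's one-line proof simply takes for granted.
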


\begin{proof}
$\theta(F,Y) \leq \frac{IG(Y;F)  - \log(n) + H_2(\theta(F,Y)) }{\log(n-1)} + 1$ Since the maximum value of a binary entropy $\overline{H_2}(\theta(F,Y))$ = 1, then we replace the binary entropy $H_2(\theta(F,Y))$ with its upper bound value $1$ in the equation: $\theta(F,Y) \leq \frac{IG(Y;F)  - \log(n) + 1 }{\log(n-1)} + 1$, hence the upper bound becomes $\overline{\theta}(F,Y) = \frac{IG(Y;F) - \log(n) + 1 }{\log(n-1)} + 1$.
\end{proof}

To ensure the exact algorithm correctness, the upper bound estimation for classification accuracy $\overline{\theta}(F,Y)$ should never underestimate the real classification accuracy. To this end, we prove the upper bound estimation of classification accuracy never underestimates the real accuracy; therefore, it is admissible. Assume $\mathcal{F} = \{F_1,F_2,...,F_n\}$ is the super set of feature subsets.  

\begin{lemma}
Given $\forall F^\prime \in \mathcal{F}$, $\overline{\theta}(F,Y) \geq \overline{\theta}(F^\prime,Y)$. The $F^*=\{f_1,f_2,...,f_n\}$ is the optimal feature subset if (a)~the stop condition $\theta(F^*,Y) \geq \overline{\theta}(F,Y)$ satisfies and (b)~the upper bound $\overline{\theta}(F,Y)$ is admissible. 
\label{lem:admissible}
\end{lemma}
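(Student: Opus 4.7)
The plan is to argue by contradiction using the three hypotheses together: the admissibility of $\overline{\theta}$, the maximality of $\overline{\theta}(F,Y)$ over all subsets in $\mathcal{F}$, and the stop condition $\theta(F^*,Y) \geq \overline{\theta}(F,Y)$. The statement is essentially the standard correctness guarantee for branch-and-bound search with an admissible bounding function, so I would follow the usual template from informed-search proofs (e.g., the A$^{*}$ optimality argument) adapted to this feature-selection context.

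Concretely, I would first assume, for contradiction, that $F^*$ is not optimal; that is, there exists some $F^{\prime\prime} \in \mathcal{F}$ whose true classification accuracy strictly exceeds that of $F^*$:
\begin{equation}
\theta(F^{\prime\prime},Y) > \theta(F^*,Y).
\end{equation}
Next, I would invoke hypothesis (b), admissibility, to obtain $\overline{\theta}(F^{\prime\prime},Y) \geq \theta(F^{\prime\prime},Y)$, and then chain this with the assumed premise that $F$ attains the maximum upper bound over $\mathcal{F}$, giving $\overline{\theta}(F,Y) \geq \overline{\theta}(F^{\prime\prime},Y)$. Finally, applying hypothesis (a), the stop condition, I would close the chain:
\begin{equation}
\theta(F^*,Y) \;\geq\; \overline{\theta}(F,Y) \;\geq\; \overline{\theta}(F^{\prime\prime},Y) \;\geq\; \theta(F^{\prime\prime},Y) \;>\; \theta(F^*,Y),
\end{equation}
which is a contradiction. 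Hence no such $F^{\prime\prime}$ exists and $F^*$ must be optimal.

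I do not anticipate a real technical obstacle here; the proof is a one-line chain of inequalities, and every link is justified by exactly one of the stated assumptions. The only subtlety worth being explicit about is the quantification: the hypothesis ``$\overline{\theta}(F,Y) \geq \overline{\theta}(F^\prime,Y)$ for all $F^\prime \in \mathcal{F}$'' must be read as saying $F$ is the current branch with the largest upper bound among \emph{all remaining candidate subsets} (including $F^*$ itself and any hypothetical better $F^{\prime\prime}$); this is what legitimises the step $\overline{\theta}(F,Y) \geq \overline{\theta}(F^{\prime\prime},Y)$. I would state this reading explicitly at the start of the proof so that the contradiction is unambiguous, and note in closing that admissibility was already established via the binary-entropy upper bound $\overline{H_2}(\theta(F,Y)) = 1$ used to derive $\overline{\theta}(F,Y)$ in the preceding lemma.
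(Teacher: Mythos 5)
Your proposal is correct and is essentially the paper's own argument: the paper proves the lemma directly via the chain $\theta(F^\prime,Y) \leq \overline{\theta}(F^\prime,Y) \leq \overline{\theta}(F,Y) \leq \theta(F^*,Y)$ for all $F^\prime \in \mathcal{F}$, using exactly your three ingredients (admissibility, maximality of $\overline{\theta}(F,Y)$, and the stop condition). Wrapping the same inequality chain in a proof by contradiction is only a cosmetic difference, so no further comparison is needed.
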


\begin{proof}
The upper bound estimation of classification accuracy  $\overline{\theta}(F,Y)$ is admissible; thus we have, $\overline{\theta}(F,Y) \geq \theta(F,Y)$. Also, $\overline{\theta}(F,Y)$ is the maximum upper bound estimation among other upper bound estimations of feature subsets. Using the admissibility, $\forall F^\prime \in \mathcal{F}, \theta(F^\prime,Y) \leq \overline{\theta}(F,Y) \leq \theta(F^*,Y)$. Therefore, $F^*$ is the optimal feature subset.      
\end{proof}

Algorithm \ref{alg:branchbound} presents the branch \& bound algorithm to compute the optimal feature subset $F^*$. We initialize the optimal feature subset $F^*$, create a max heap $\mathcal{H}$, and initialize it to store the search information in line 1. The max heap $\mathcal{H}$ contains a collection of entries, i.e., $e \in \mathcal{H}$. Each entry $e$ has three information components: (a) $e.F$ is the feature subset, (b) $e.\theta$ contains the classification accuracy of the entry feature subset $\theta(e.F,Y)$, and (c) $e.\overline{\theta}$ which is the upper bound estimation of the classification accuracy of the entry feature set $\overline{\theta}(e.F,Y)$. In lines 2-6, we kick off the search by initializing $\mathcal{H}$ with the first elements. In lines 7-8, we pop the top element from $e$. If the termination condition $e.\overline{\theta} < \theta^{min}$ is satisfied, we stop the search as pseudocoded in lines 9-10. Otherwise, we expand the feature subset $e.F$ by adding more features from the feature set $F$ and generate the new feature subset $e^\prime.F$ in line 12. Lines 13-14 compute the new expanded entry classification accuracy $e^\prime.\theta$ and upper bound accuracy $e^\prime.\overline{\theta}$ respectively. Then, we push the new expanded entry $e^\prime into \mathcal{H}$ in line 15. In line 18, we return the optimal feature subset $F^*$.               

\begin{algorithm}[tb]
\small 
\SetKwInOut{Input}{Input}
\Input{ Twitter Data $T$, The Class Values $Y$, Initial Feature Set $F$, }

\SetKwInOut{Output}{Output}
\Output{ Optimal Feature Set $F^*$ }
	
	$F^* \gets \emptyset$; $\mathcal{H} \gets \emptyset$; $\theta^{min} \gets 0$ \tcp*[r]{initialisation}
	
    \While{ $f_i = getNext(F) \neq \emptyset$}
	{	
	   $e.F \gets f_i$;\par
	   $e.\theta \gets \theta(e.F,Y)$;\par
	   $e.\overline{\theta} \gets \frac{IG(Y;e.F) - \log(n) + 1  }{\log(n-1)} + 1$ \tcp*[r]{as per Equation \ref{eq:upperbound}}
	   $\mathcal{H}.push(e)$ \tcp*[r]{insert $e$ into $\mathcal{H}$}
	 }
	 
	 \While{ $\mathcal{H} \neq \emptyset$ }
	 {
	    $e \gets \mathcal{H}.pop()$;\par
	    \If{ $e.\overline{\theta} < \theta^{min}$ }
	    {
	        \textbf{break} \tcp*[r]{as per lemma \ref{lem:admissible}}
	    }
	    \While{ $f_i = getNext(F) \neq \emptyset$ }
	    {
	        $e^\prime.F \gets e.F \cup f_i$ \tcp*[r]{add feature $f_i$ to the feature set}
	        $e^\prime.\theta \gets \theta(e.F,Y)$;\par
	        $e.^\prime\overline{\theta} \gets \frac{IG(Y;e.F) - \log(n) + 1 }{\log(n-1)} + 1$ \tcp*[r]{as per Equation \ref{eq:upperbound}}
	        $\mathcal{H}.push(e^\prime)$ \tcp*[r]{insert $e^\prime$ into $\mathcal{H}$}
	    }
	     $\theta^{min} \gets e.\theta$ \tcp*[r]{update $\theta^{min}$}
	    
	 }
	 $F^* \gets e.F$ \tcp*[r]{optimal feature subset $F^*$ computed}
	\Return{$F^*$};
	\caption{Branch \& Bound Search}
\label{alg:branchbound}
\end{algorithm}


\section{The Greedy-based Sub-optimal Solution}
\label{Sec:GreedySuboptimal}
The branch and bound search algorithm performs an informed search on the feature space of Twitter data to compute the optimal subset $F^* \subset F$, which improves the classification performance. However, the informed branch \& bound algorithm is computationally heavy; thus, it is impractical when the Twitter data arrive dynamically. Therefore, we propose a greedy-based algorithm to improve the performance and to adapt our informed branch \& bound algorithm to a dynamic environment. To this end, we discuss two important properties for the evaluation function $\theta(F,Y)$.

\begin{property}
The function $\theta(F,Y)$ is non-negative, i.e., $\forall F \in \mathcal{F}$, $\theta(F,Y) \geq 0$.  
\end{property}

\begin{property}
The function $\theta(F,Y)$ is non-monotone, i.e., $\exists F_2 \subset F_1 \subset \mathcal{F}$ such that (a) $\theta(F_1,Y) \geq \theta(F_2,Y)$, or (b) $\theta(F_1,Y) < \theta(F_2,Y)$.  
\end{property}

Since $\theta(F,Y)$ is non-monotone, we design a greedy-based branch \& bound algorithm by restricting the search expansion to the most promising features called the seed of features $F^s$. To design the greedy-based algorithm, we discuss two points: (a) we prepare a seed of features $F^s = \{f_1, f_2,...,f_n\}$ based on their single accuracy in the Twitter data classification process and (b) we only expand a search branch when the feature subset score gets improved in each step. Thus, we only expand the search with seed features and eventually increase the number of features. Moreover, we continue the expansion if adding a new feature improves the score of the feature subset. The main steps of our greedy-based algorithms are as follows.

We start with a set of seed features $F^s \subset F$, $\forall f \in F^s$ we generate an entry $e$ by setting the entry's feature set e.F, the entry's score $e.\theta$, and the entry's upper bound $e.\overline{\theta}$. Then we push $e$ into the heap $\mathcal{H}$. In a loop, we retrieve $e$ until $e.\overline{\theta} < \theta^{min}$ and do the following steps: we check if $\exists f^\prime \in F \setminus e.F$ such that $\theta(e.F \cup f^\prime,Y) > \theta(e.F,Y)$ then we add the feature $e.F = e.F \cup f^\prime$ and push $e$ into $\mathcal{H}$. Also, we check if $\exists f^\prime \in e.F$ such that $\theta(e.F \setminus f^\prime,Y) > \theta(e.F,Y)$ then we exclude the feature $e.F = e.F \setminus f^\prime$ and push $e$ into $\mathcal{H}$. At the end of the loop, we return the sub-optimal entry $e$ that contains the sub-optimal feature subset. We define the sub-optimal feature subset as follows:

\begin{definition}\textbf{Sub-optimal Feature Subset}
Given $F^*=\{f_1,f_2,$ $...,f_n\}$ and the classification accuracy $\theta(F^*,Y)$, $F^*$ is sub-optimal feature subset, if (a) $\forall f_i \in F^*$, $\theta(F^*,Y) \geq \theta(F^* \setminus f_i,Y)$, and (b) $\forall f_j \not\in F^*, \theta(F^*,Y) \geq \theta(F^* \cup f_j,Y)$.   
\end{definition}

To further improve the ED-Filter, we employ deep learning to predict the sub-optimal feature subset cardinality and propose a hybrid greedy deep-learning method to skip more unnecessary search expansions.  

\section{The Hybrid Solution}
\label{Sec:HybridDeppLearning}
Although the greedy-based branch \& bound search skips many unnecessary expansions throughout the search process, it is still expensive. That is because the greedy-based search expands many entries with various feature subset sizes. Thus, the search tree becomes extremely deep, which increases the processing complexities. Moreover, the dynamic Twitter data arrives frequently, and we should apply feature selection numerously to extract reliable features for the classification task. Therefore, we propose a hybrid method that combines deep learning with the greedy-based approach to improve performance further. To this end, we design a deep learning model that trains on the Twitter data to predict the suitable feature subset cardinality for running the search. 

\begin{algorithm}[]
\small 
\SetKwInOut{Input}{Input}
\Input{ Twitter Data $T$, The Class Values $Y$, Initial Feature Set $F_D$, Neural Network Model $Model$ }

\SetKwInOut{Output}{Output}
\Output{ Sub-Optimal Feature Set $F^*$ }
	
	$F^* \gets \emptyset$; $\mathcal{H} \gets \emptyset$; $\theta^{min} \gets 0$ \tcp*[r]{initialisation}
	$F^s \gets \{f_1,f_2,...,f_n\}$ \tcp*[r]{seed features}
	$c \gets Model.featureCount(F,Y)$ \tcp*[r]{retrieve the feature subset cardinality}
    \While{ $f_i = getNext(F^s) \neq \emptyset$}
	{	
	   $e.F \gets f_i$;\par
	   $e.\theta \gets \theta(e.F,Y)$;\par
	   $e.\overline{\theta} \gets \frac{IG(Y;e.F) - \log(n) + 1 }{\log(n-1)} + 1$ \tcp*[r]{as per Equation \ref{eq:upperbound}}
	   $\mathcal{H}.push(e)$ \tcp*[r]{insert $e$ into $\mathcal{H}$}
	 }
	 
	 \While{ $\mathcal{H} \neq \emptyset$ }
	 {
	    $e \gets \mathcal{H}.pop()$;\par
	    \If{ $e.\overline{\theta} < \theta^{min}$ }
	    {
	        \textbf{break} \tcp*[r]{as per lemma \ref{lem:admissible}}
	    }
	    \While{ $f_i = getNext(F) \neq \emptyset$ }
	    {
	        \If{ $|e.F| \geq c$ }
	        {
	            \textbf{break}
	        }
	        $e^\prime.F \gets e.F \cup f_i$ \tcp*[r]{add feature $f_i$} 
	        $e^\prime.\theta \gets \theta(e.F,Y)$;\par
	        \If{ $e^\prime.\theta > e.\theta$ }
	        {
	            $e.^\prime\overline{\theta} \gets \frac{IG(Y;e.F) - \log(n) + 1 }{\log(n-1)} + 1$ \tcp*[r]{Equation \ref{eq:upperbound}}
	            $\mathcal{H}.push(e^\prime)$ \tcp*[r]{insert $e^\prime$ into $\mathcal{H}$}
	        }
	        
	    }
	    \While{ $f_i = getNext(e.F) \neq \emptyset$ }
	    {
	        $e^\prime.F \gets e.F \setminus f_i$ \tcp*[r]{exclude feature $f_i$ }
	        $e^\prime.\theta \gets \theta(e^\prime.F,Y)$;\par
	        \If{ $e^\prime.\theta > e.\theta$ }
	        {
	            $e.^\prime\overline{\theta} \gets \frac{IG(Y;e.F) - \log(n) + 1 }{\log(n-1)} + 1$ \tcp*[r]{Equation \ref{eq:upperbound}}
	            $\mathcal{H}.push(e^\prime)$ \tcp*[r]{insert $e^\prime$ into $\mathcal{H}$}
	        }
	        
	    }
	     $\theta^{min} \gets e.\theta$ \tcp*[r]{update $\theta^{min}$}
	    
	 }
	 $F^* \gets e.F$ \tcp*[r]{sub-optimal feature subset $F^*$ computed}
	\Return{$F^*$};
	\caption{Deep Learning Greedy-based Search}
\label{alg:deeplearninggreedy}
\end{algorithm}

The input of the deep learning model is the full feature set with the class values, while the output is the number of selected features. To train the model, we break down the Twitter data into smaller chunks and feed the neural network with these chunks. Also, we feed the neural network with the optimal feature subset cardinality. To find the optimal feature subset size $|F^*|$, we run a comprehensive search on the feature space $F_i$ of each Twitter data chunk $T_i \subset T$ and compute the classification accuracy $\theta(F_i,Y_i)$ for the Twitter data on eating disorder types. Then, we choose the feature subset cardinality that has led to the best classification accuracy to feed the neural network for training purposes. After the model is prepared, we can call the feature count procedure, i.e., $Model.featureCount(F,Y)$, and supply the data, including features $F$ and the class $Y$ to predict the feature subset cardinality. 
Assume we have a table $Dt$ that contains a list of rows that represent the data records, and for each data row the optimal feature cardinality is defined. Then, the neural network $Model$ is trained on the table $Dt$, which predicts the optimal feature cardinality of every input data record. We design a multi-layer perceptron with two hidden layers. The model has one node for each class in the output layer and adopts the softMax activation function. The loss function is the $sparse\_categorical\_crossentropy$. The model is optimized by adopting the $adam$ version of stochastic gradient descent and seeks to minimize the cross-entropy loss.

Algorithm \ref{alg:deeplearninggreedy} presents the hybrid approach. After initialization, we set the feature seed $F^s$ in line 2. Line 3 applies the deep learning model trained on offline data to predict the size of the sub-optimal feature subset $F^*$. In lines 4-8, we initialize the max heap $\mathcal{H}$ with the feature seed. These features are extracted based on their information gain score. In line 10, if $\mathcal{H} \neq \emptyset$, we pop the top element from $e$. Lines 11-12 check the termination condition and finish the search when $e.\overline{\theta} < \theta^{min}$ satisfies. In lines 14-15, if $|e.F| \geq c$, we stop expanding the feature subset because it has already reached the size threshold $c$. Otherwise, we generate the new expanded entry $e^\prime$ by inserting a new feature into $e^\prime.F$ in line 16. Then, we compute the classification accuracy $\theta(e^\prime.F,Y)$ and the upper bound of accuracy $\theta(e^\prime.F,Y)$ for the expanded feature subset $e^\prime.F$ respectively in lines 16-17. In line 18, if the accuracy score of the expanded entry $e^\prime.\theta$ is better than the score of the original entry $e.\theta$ (it means that by adding $f_i$ to $e.F$ accuracy increases), the expanded entry $e^\prime$ is acceptable; hence, we compute the upper bound of accuracy $e^\prime.\overline{\theta}$ and push it into $\mathcal{H}$ in lines 19-20. In line 21, we read a feature from the entry's feature subset $f_i \in e.F$. Then, we remove $f_i$ from the entry feature subset and generate a new shrunk entry $e^\prime$ in line 22. Line 23 computes the classification accuracy $e^\prime.\theta$. In line 24, if the accuracy score of the shrunk entry $e^\prime.\theta$ is better than the score of the original entry $e.\theta$ (it means that by removing $f_i$ from $e.F$ accuracy increases), the new shrunk entry is selected for the search. Therefore, we compute the accuracy upper bound $e^\prime.\overline{\theta}$ and push it into $\mathcal{H}$ in lines 25-26. Finally, in line 29, we return the sub-optimal feature subset $F^*$.                 

\section{Experimental Analysis}
\label{Sec:ExperimentalAnalysis}
In this section, we present experimental results to verify the effectiveness and efficiency of our proposed ED-Filter. All of our algorithms are implemented in Python and the experiments are run on a PC with 1.8 GHz, 8 GB memory, and 64-bit Windows 10. We collected the data from Twitter and from the users who post their Tweets about eating disorders. This study consisted of $11,620$ Pro-ED Twitter accounts that posted using the hashtag $\#proana$ between September 2015 and July 2018. 

\subsection{Data Collection, Cleaning, and Processing}
The Twitter eating disorder raw data was collected by using a streaming API. To collect the ED-related Tweets, we have used a list of hashtags and keywords such as $anamia$, $thinspo$, $fitspo$, and so on. The workflow of our data collection, cleaning, and processing is summarised into three steps: (a) text collection, (b) text cleaning, and (c) topic modeling. 
\begin{itemize}
    \item \textbf{text collection}. We collected 37405 tweets that contain ED-related hashtags and keywords.
    \item \textbf{text cleaning}. We removed the usernames, URLs, stop words, and numbers. Also, we removed non-standard characters such as emojis, tabs, and so on. Then, we lowered the word cases.
    \item \textbf{topic modelling}. We built various CorEx models with a varying number of topics $(n=[2-50])$. Then, we selected the model with the best number of topics. Finally, we extracted the topics by using the best model.   
\end{itemize}

\begin{table*}[t]
\centering
\caption{Extracted Topics on Twitter Eating Disorder Data}
\small
\begin{tabular}{ |l|p{100mm}| }
  \hline
  Topic $\#$ & Topic Keywords \\
  \hline
  Topic 1 & news, skinnyispretty, edproblems, reasonsnottoeat, models, sites, bony, month, banned, mentalillness \\
   \hline
  Topic 2 & wear, kill, honey, clothing, dieting, loss, single, screw, everythingproana, tips  \\
   \hline
  Topic 3 & proanamia, thinsporation, edgoals, disorder, skinnydreams, site, ugly, support, apart, thinspor  \\
   \hline
  Topic 4 & thinspo, ached, water, ice, hunger, calories, people, happiness, heaven, purging  \\
   \hline
  Topic 5 & weightproana, reason, tea, finding, wanted, making, scale, fatproana, inside, sure  \\
  \hline
  Topic 6 & diet, food, abc, hiding, coke, weightloss, dog, reward, loseweight, eatbetter  \\
  \hline
\end{tabular}
\label{tbl:topics}
\end{table*}

\definecolor{colorGreedy}{HTML}{82B0D2}
\definecolor{colorHybrid}{HTML}{FFBE7A}
\definecolor{colorDifference}{HTML}{8ECFC9}

\begin{figure*}[h]
	\centering
    \begin{tikzpicture}[scale=1, every node/.style={scale=1}]
		\fill[colorGreedy] (0,0.1) rectangle (0.8,0.4); 
        \node[anchor=west,black] at (0.9,0.25) {Greedy};

        \fill[colorHybrid] (3,0.1) rectangle (3.8,0.4); 
        \node[anchor=west,black] at (3.9,0.25) {Hybrid};

        \fill[colorDifference] (6,0.1) rectangle (6.8,0.4); 
        \node[anchor=west,black] at (6.9,0.25) {Difference};
	\end{tikzpicture}
	    \subfloat[]{
		\includegraphics[width = 1.5in]{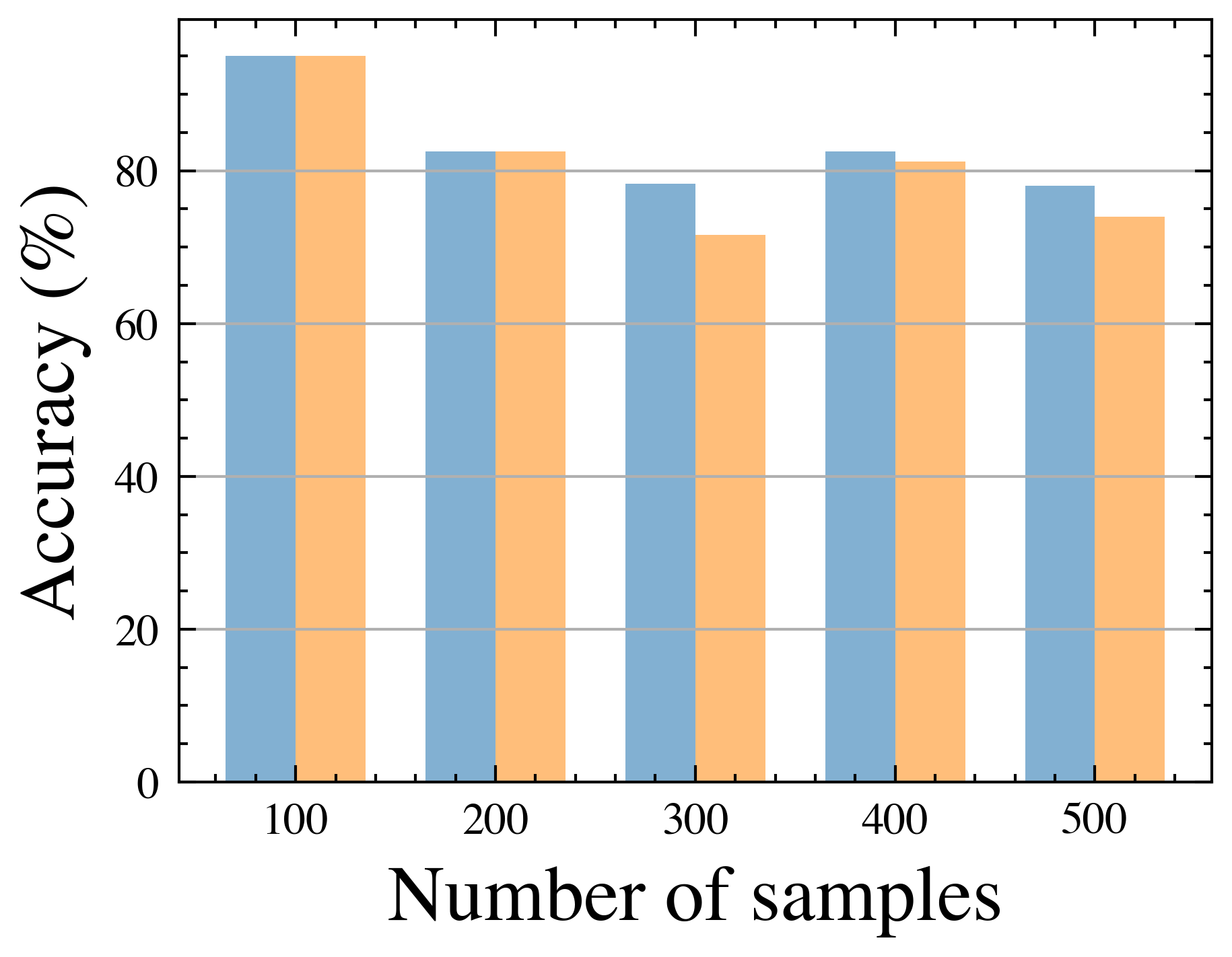}\hspace{0.01cm}
		}
		\subfloat[]{
		\includegraphics[width = 1.5in]{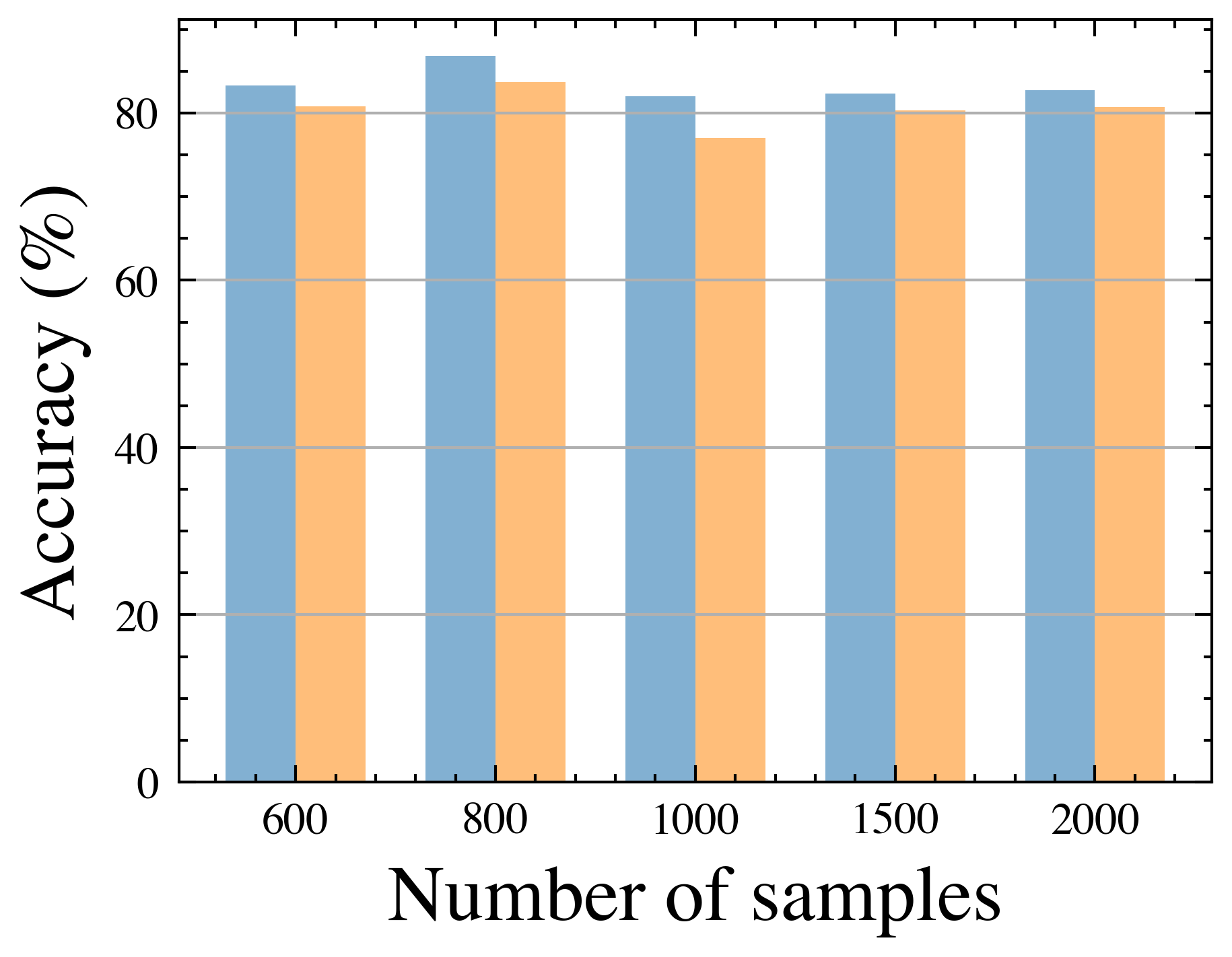}\hspace{0.01cm}
		}
		\subfloat[]{
		\includegraphics[width = 1.5in]{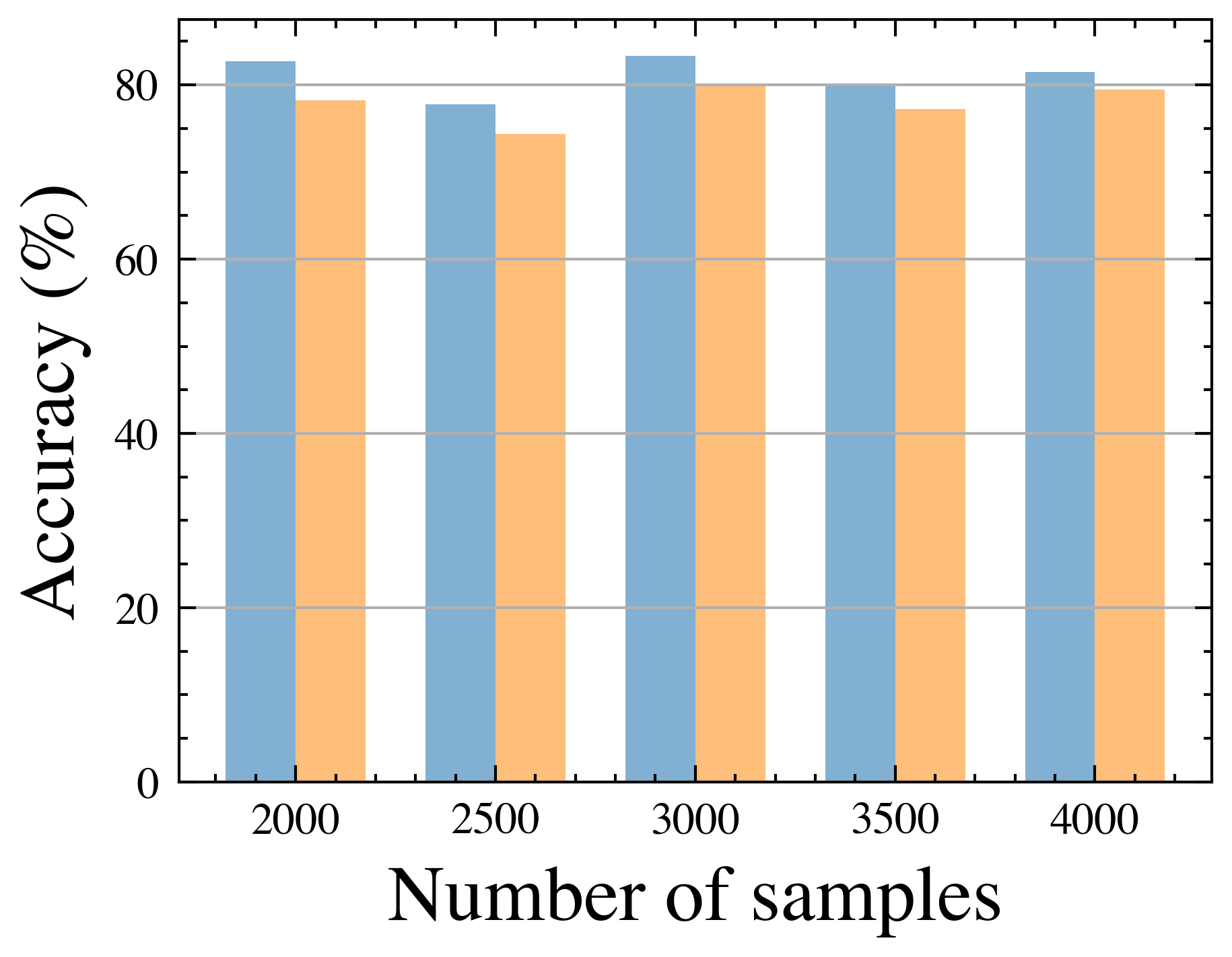}\hspace{0.01cm}
		}\\
		\subfloat[]{
		\includegraphics[width = 1.5in]{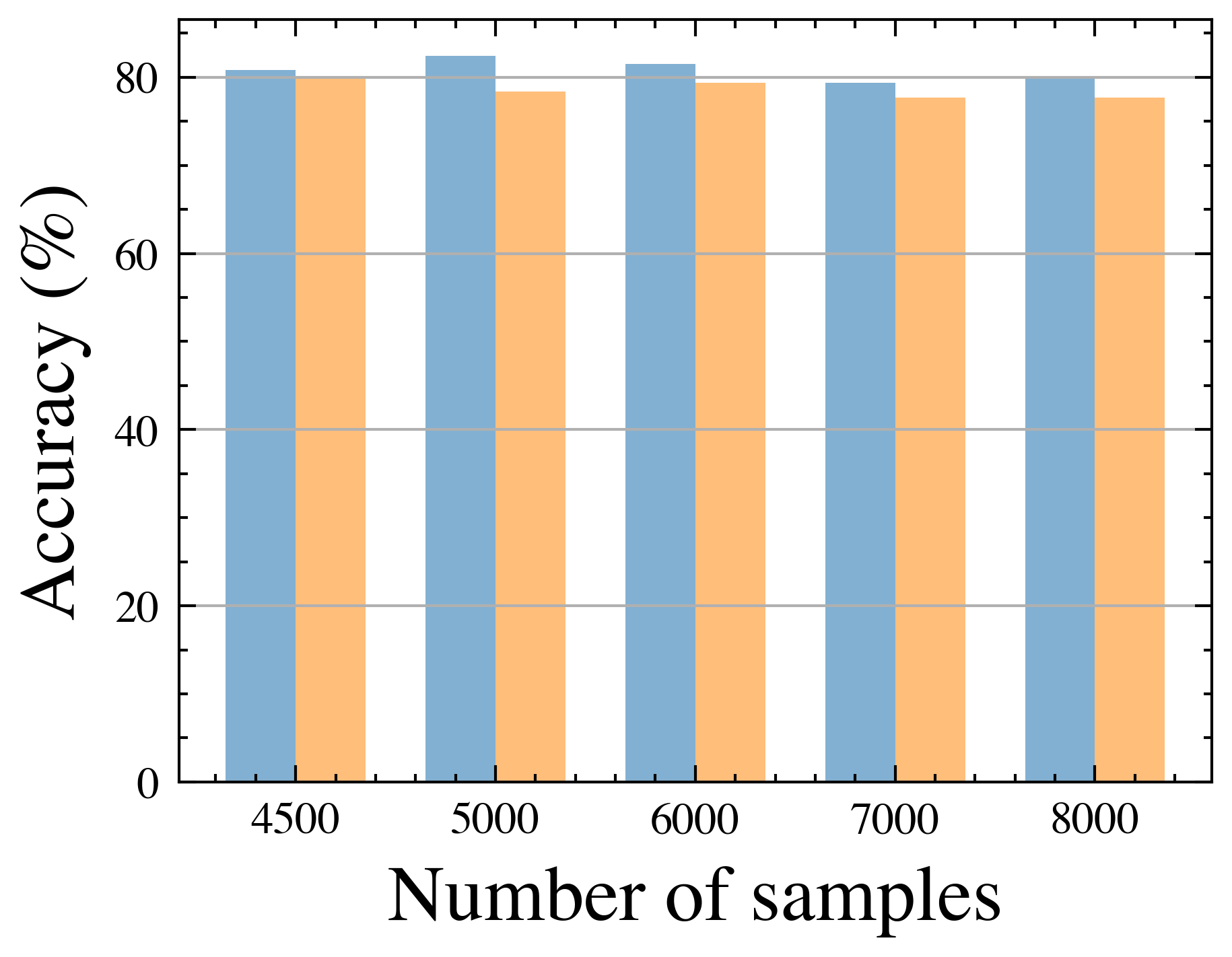}\hspace{0.01cm}
		}
		\subfloat[Worst Case Scenario, $\Delta = 6.5$, $\delta = 3$]{
		\includegraphics[width = 1.5in]{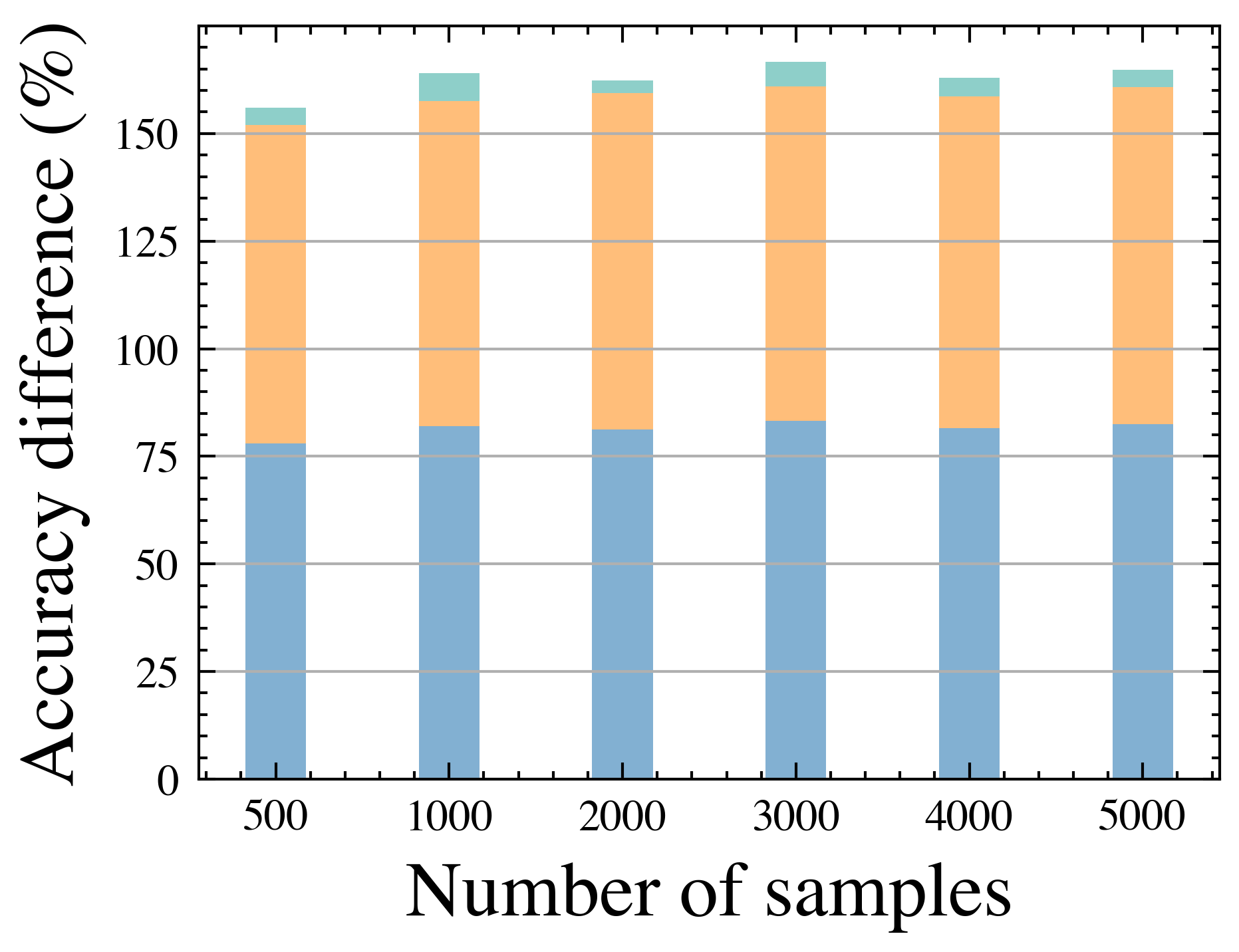}\hspace{0.01cm}
		}
		\subfloat[Best Case Scenario, $\Delta = 4.5$, $\delta = 1$]{
		\includegraphics[width = 1.5in]{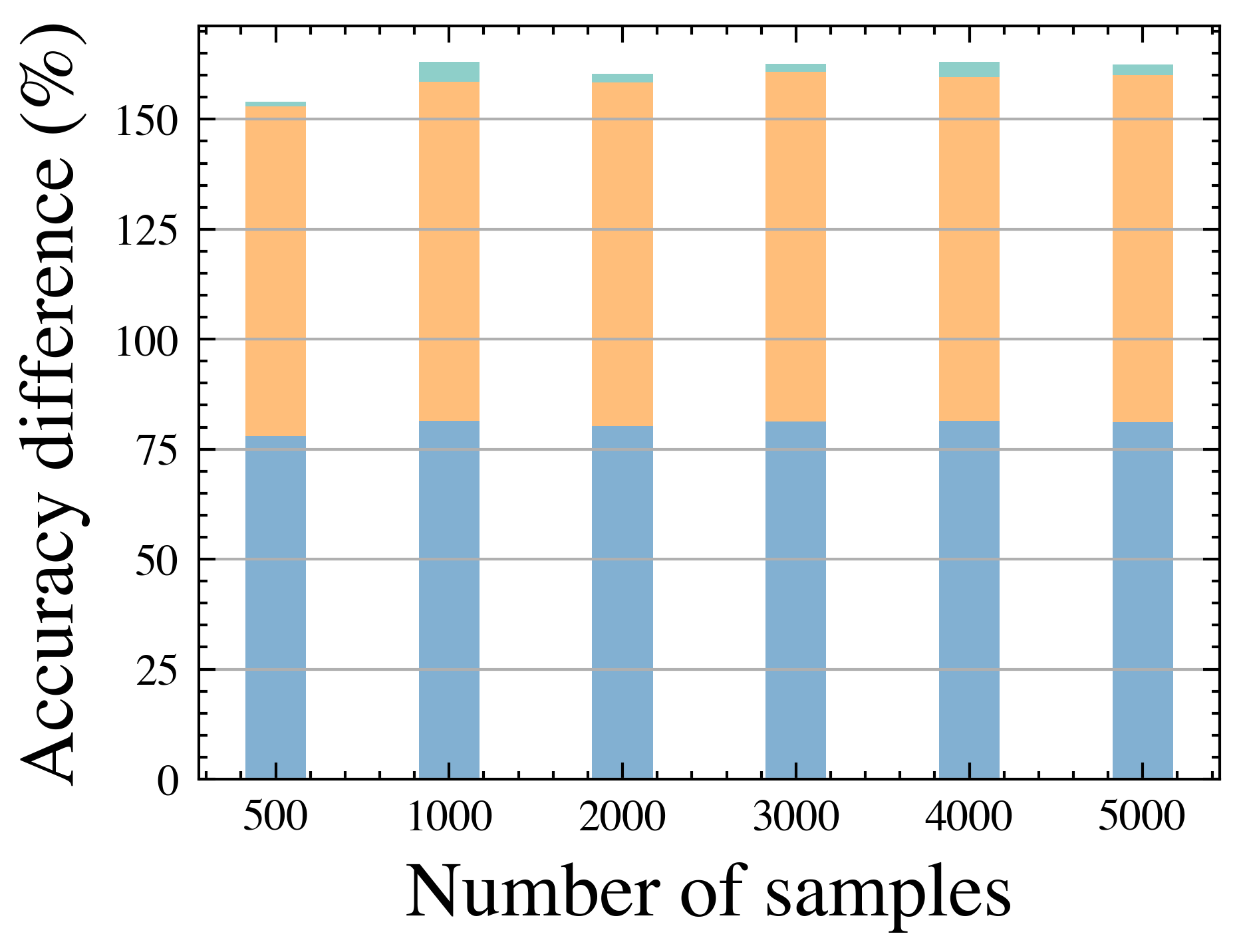}\hspace{0.01cm}
		}
		\caption{Effectiveness Analysis, figure (a), (b), (c), (d) show the effectiveness of greedy and hybird methods, (e) and (f) demonstrate the effectiveness difference between greedy and hybird methods.}
	\label{fig:Effectiveness}
\end{figure*}

For the text collection, we store the textual records in a MySQL database. We create separate data columns to store the different fields of the Tweets, including the text, creation time, location, count of friends, count of followers, description, and hashtags. We run SQL scripts on the data table to clean the text by removing special characters, tabs, spaces, and emojis.

We utilize Information Gain for our filtering method to rank the features based on the well-established entropy measure. The Information Gain is a suitable choice because it is lightweight and proved effective in terms of ranking the prominent features. Moreover, for our cardinality detector model, we select Relu for our hidden layers activation function because it usually produces good results for the starting point, and we pick up Softmax for our output layer activation function because our problem is a multi-class classification. Moreover, we choose Adam optimizer to speed up the process.

In the topic modeling phase, we aim for extracting and detecting the users' ideas about ED-related issues. Later, we use this information to determine the eating disorders' data features. Therefore, we apply the topic modeling technique to extract the prominent topics within the users' tweet collection. There are several topic modeling methods in the literature. We adopt CorEx statistical modeling since it is a well-established method for topic modeling; moreover, it produces the topics with minimum overlap~\cite{SteegG14, RizviWNV00019}. Table \ref{tbl:topics} presents $6$ topics extracted by CorEx. Each topic in CorEx is represented by a set of keywords. The set of keywords in each topic conveys a particular meaning together.

We determine the features of the Twitter eating disorder data by using the topic keywords and consultation with the eating disorders experts. After consultation with experts, we define 25 features for the Twitter data such as $F$ = \{$ExplicitEDTerms$, $AgainstED$, $ProAnaFamily$, $BodyAndBodyImage$, $Body$ $Weight$, $FoodAndMeals$, $EatORAte$, $Exercise$, $Binge$, $Fitspo$, $Beauty$, $Xmas$, $Summer$, $Winter$, $Halloween$, $BodyParts$, $Bullying$, $DomesticViolence$, $MentalHealth$, $Depressed$, $Suicide$, $Accessory$, $ContemporaryBehavior$, $Thinspo$, $CaloricRestriction$\}. Each feature $f \in F$, is represented by a set of keywords from the topic modelling process and consultation with eating disorders experts. For instance, the feature $BodyImage$ is represented by the following keywords: $overweight, obese, fatty, skinnier, skeleton$. To generate the eating disorder classification data for each feature, we compute the total number of their corresponding keywords appearance within the Tweet collection. Thus, for each user, we create a row that contains $25$ features (columns). Each feature (column) represents the aggregated count of its corresponding keywords that appeared in the user's tweet collection.

\subsection{Effectiveness Assessment}
In this section, we compare the effectiveness of our proposed ED-Filter algorithm. We use the classification accuracy of the reduced (filtered) eating disorder data to verify the effectiveness of the feature selection techniques. 

Figure \ref{fig:Effectiveness} presents the effectiveness results. Figures \ref{fig:Effectiveness}.a to \ref{fig:Effectiveness}.d compare the classification accuracy between the greedy and hybrid methods by varying the number of samples in the Twitter eating disorders data. The accuracy of the greedy method is equal to or better than the hybrid method in all cases. That's because the greedy method performs an informed search on the data feature space and computes the sub-optimal feature space; however, the hybrid method performs the search on a specific number of feature spaces with certain cardinality, which may skip the sub-optimal feature space. Furthermore, the results verify that the hybrid method achieves comparable effectiveness with the greedy method. This indicates that the hybrid method successfully skips many non-promising solutions, which is capable of finding solutions that are very close to the greedy method in terms of effectiveness. 

Figures \ref{fig:Effectiveness}.e and \ref{fig:Effectiveness}.f present the difference in classification accuracy between greedy and hybrid methods in the worst and best-case scenarios, respectively. The maximum and minimum accuracy differences between the greedy and hybrid methods are denoted by $\Delta$ and $\delta$, respectively. In Figure \ref{fig:Effectiveness}.e, the maximum difference $\Delta=6.5$ and the minimum difference is $\delta=3$ in the worst-case scenario. In this experiment, we run each feature selection method a couple of times and compute the maximum difference between the two methods. Conversely, in Figure \ref{fig:Effectiveness}.f, we compute the minimum difference between the two methods. The maximum difference is $\Delta=4.5$ while the minimum difference is $\delta=1$ in the best-case scenario. On average, the accuracy difference is below five percent between the two methods, which verifies that the hybrid method is comparable to the greedy method in terms of effectiveness.               

\definecolor{EDFilterColor}{HTML}{8ECFC9}
\definecolor{InfoGainFilterColor}{HTML}{FFBE7A}
\definecolor{ReliefFilterColor}{HTML}{FA7F6F}
\definecolor{WrapperBestFirstColor}{HTML}{82B0D2}
\definecolor{WrapperForwardColor}{HTML}{2878b5}

\begin{figure}
\hspace{-0.1cm}
	\centering

    \begin{minipage}{0.9\linewidth}
        \centering
        \begin{tikzpicture}
            \newcommand{\blockwidth}{0.4}  
            \newcommand{\blockheight}{0.2}  
            \newcommand{\blockspacing}{2.3}  
            \newcommand{\linespacing}{0.3}  

            \fill[EDFilterColor] (0,0) rectangle (\blockwidth,-\blockheight);
            \node[anchor=west] at (\blockwidth + 0.1, -0.25*\blockheight) {ED-Filter};
            
            \fill[InfoGainFilterColor] (0+\blockwidth+\blockspacing-0.2,0) rectangle (\blockwidth*2+\blockspacing-0.2,-\blockheight);
            \node[anchor=west] at (\blockwidth*2 + 0.1 +\blockspacing-0.2, -0.25*\blockheight) {InfoGain-Filter};
            
            \fill[ReliefFilterColor] (0+2*\blockwidth+2*\blockspacing,0) rectangle (\blockwidth*3+2*\blockspacing,-\blockheight);
            \node[anchor=west] at (\blockwidth*3 + 0.1 +2*\blockspacing, -0.25*\blockheight) {Relief-Filter};

            \fill[WrapperBestFirstColor] (0,-\blockheight-\linespacing) rectangle (\blockwidth,-2*\blockheight-\linespacing);
            \node[anchor=west] at (\blockwidth + 0.1, -1.25*\blockheight-\linespacing) {Wrapper-BestFirst};
            
            \fill[WrapperForwardColor] (0+\blockwidth+\blockspacing+1,-\blockheight-\linespacing) rectangle (\blockwidth*2+\blockspacing+1,-2*\blockheight-\linespacing);
            \node[anchor=west] at (\blockwidth*2 + 0.1 +\blockspacing+1, -1.25*\blockheight-\linespacing) {Wrapper-Forward};
        \end{tikzpicture}
    \end{minipage}
    
	    \subfloat[]{\label{fig:Comparison}
		\includegraphics[width = 0.48\linewidth]{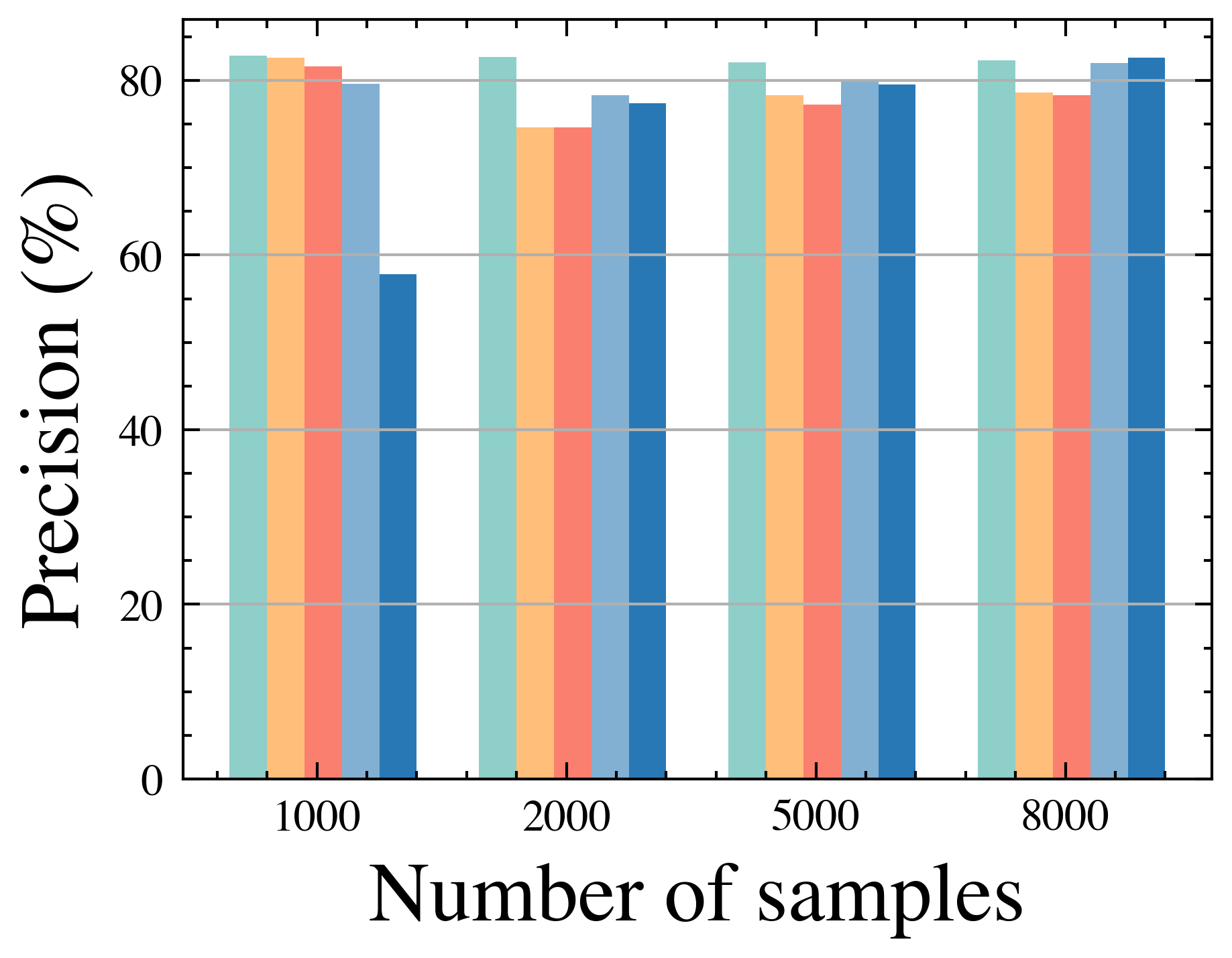}
		}
		\subfloat[]{\label{fig:Cardinality}
		\includegraphics[width = 0.48\linewidth]{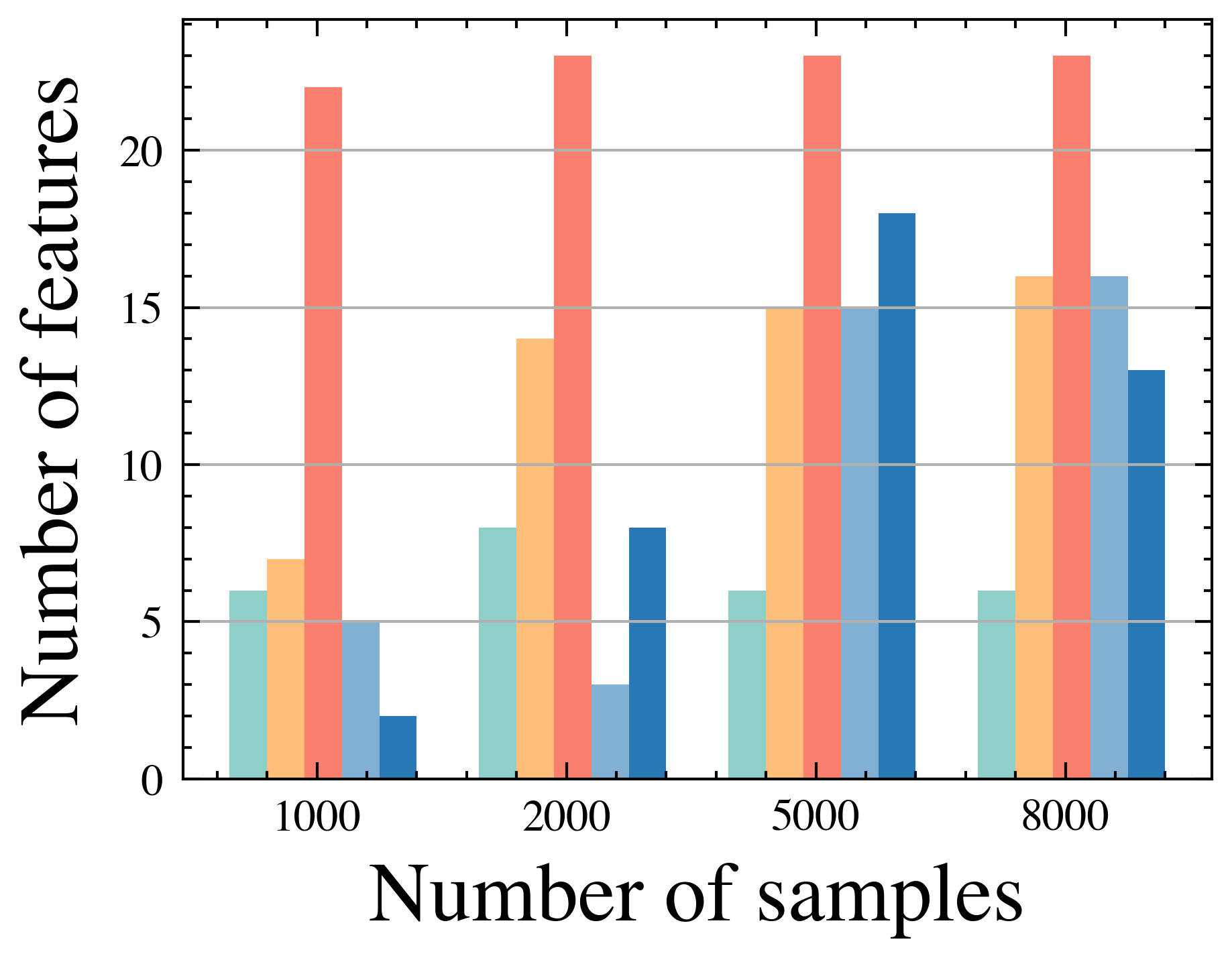}
		}
		
		\caption{ED-Filter versus Other Methods.}
	\label{fig:EffectivenessComparison}
\end{figure}

\subsection{Comparison with Other Methods}
We compare ED-Filter with a couple of established feature selection methods that are categorized as filters or wrappers. The comparison results are presented in Figure \ref{fig:EffectivenessComparison}. We choose the well-established Information Gain~\cite{Shang12} and Relief feature filtering~\cite{URBANOWICZ2018189} methods that are called \textit{InfoGain-Filter}, and \textit{Relief-Filter} in Figure \ref{fig:EffectivenessComparison}. Also, we choose two wrappers~\cite{KohaviJ97, BERMEJO201235} that utilize the best first search and linear forward selection to perform the subset evaluation, respectively. We call them \textit{Wrapper-BestFirst}, and \textit{Wrapper-Forward} in Figure \ref{fig:EffectivenessComparison}. To make a fair comparison, the wrappers employ the Multinomial Naive Bayes classifier to evaluate the feature subsets as we employ this classifier in our ED-Filter.

Figure \ref{fig:EffectivenessComparison}.a compares the precision of various feature selection methods with different data sizes. From the Figure, when the data size equals $1000$, the best precision is for ED-Filter, which is around 83$\%$. Then, the InfoGain-Filter achieves the second-highest precision. However, the wrappers are below the filters and ED-Filter when the data size is $1000$. When we increase the data size to $2000$ samples, the wrappers' precision improves. Conversely, the precision of the filter deteriorates and gets below 75$\%$ while the ED-Filter remains stable in terms of accuracy. When we further increase the data size to $5000$ samples, the wrappers exhibit a better precision for feature selection than the filters. However, the ED-Filter achieves the best precision. That is because the ED-Filter effectively performs an informed search on the feature space and quickly finds the sub-optimal solution. However, the filters utilize the internal data statistics to compute the feature subset without evaluating the merit of subsets against a learning model. Finally, when the data size gets to $8000$ samples, we observe that our ED-Filter achieves a comparable precision with the previous rounds, which are above 82$\%$. This verifies that ED-Filter is a scalable feature selection method and that the data size has the minimum impact on its accuracy. However, the feature filtering methods' precision is negatively affected when the data size grows. On the other hand, the wrappers' precision is negatively affected when the data size is small. That is because the wrappers employ a learning model to assess the feature subsets, and when the data size grows, they compute the more effective solution. To sum up, the ED-Filter is the only stable feature selection method on all data sizes. It achieves the best precision in most cases. The filters' precision is better than wrappers when the data size is small. Finally, the wrappers' precision gets closer to ED-Filter when the data size grows; however, they are more costly than the ED-Filter.

Figure \ref{fig:EffectivenessComparison}.b presents the number of selected features for each method when we vary the data size. From the Figure, the ED-Filter chooses the minimum number of features in comparison with other methods. This verifies that the informed search and the termination condition effectively break the feature selection process quickly. The quick termination property has made the ED-Filter more scalable and practical for dynamic Twitter eating disorder data classification. The Relief-Filter has the maximum number of selected features among others, which is around $23$ features in most cases. The number of selected features in the wrappers increases to above $12$ when the data size grows to $5000$ samples. This verifies that despite their high accuracy in big data sizes, the wrappers do not generate a scalable solution.   
\begin{table*}[t]
\centering
\caption{Comparison of Dimensionality Reduction Methods}
\small
\begin{tabular}{ |l|l|l|l|l| }
  \hline
  Method & class1 & class2 & class3 & class4 \\
  \hline
  SVD (85\%) & 0.79 & 0.4 & 0.41 & 0.35 \\
   \hline
  SVD (65\%) & 0.79 & 0.39 & 0.38 & 0.3  \\
   \hline
  SVD (45\%) & 0.77 & 0.35 & 0.36 & 0.4   \\
   \hline
  LDA (85\%) &  0.81 & 0.5 & 0.55 & 0.42 \\
   \hline
  LDA (65\%) &  0.81 & 0.48 & 0.62 & 0.35 \\
   \hline
  LDA (45\%) &  0.81 & 0.44 & 0.62 & 0.32 \\
   \hline
  ED-Filter  &  0.87 & 0.6 & 0.72 & 0.58 \\
  \hline
\end{tabular}
\label{tbl:comparison}
\end{table*}

In this section, we evaluate our proposed ED-Filter against established dimensionality reduction techniques, including Singular Value Decomposition~(SVD)~\cite{Jianwen2024} and Linear Discriminant Analysis~(LDA)~\cite{Li2024}. We perform our experiments by varying the variance parameters for SVD and LDA to values of $85\%$, $65\%$, and $45\%$. Table \ref{tbl:comparison} presents the comparison outcomes among ED-Filter, SVD, and LDA. From the table, we observe that all methods exhibit relatively acceptable performance concerning the precision of class 1. This can be attributed to the high number of records in class 1, allowing the dimensionality reduction techniques to effectively extract patterns from the majority class during the reduction process. Furthermore, LDA generally outperforms SVD in most cases because it employs a supervised approach to the reduction process. In contrast, the precisions for class 2 and class 4 are lower than those of classes 1 and 3, primarily due to the smaller number of records available. Our proposed ED-Filter achieves better precision across all classes compared to both SVD and LDA. This improvement is a result of ED-Filter's informed search strategy, which combines wrapper methods with deep learning techniques, utilizes a more systematic fine-grained feature selection process, and selects the most reliable features in a supervised manner.

\subsection{Scalability Analysis}

We further test the scalability of the branch and bound search method with the hybrid method in ED-Filter. The scalability of branch and bound search algorithm is illustrated in Figure \ref{fig:BBoundPerformance}. In Figure \ref{fig:BBoundPerformance}.a, we restrict the number of features to $6$. We observe that the execution time grows fast when there are $5$ and $6$ features. Figure \ref{fig:BBoundPerformance}.b presents the results when the feature size reaches up to $9$ features. The branch and bound search method exhibits a sharp jump when the number of features rises above $6$. In each step from $6$ to $9$, the execution time increases sharply. This verifies that despite the informed search and the termination condition in the branch and bound search method, this method is not scalable. When the number of features grows, the execution time grows exponentially, which makes branch and bound search method impractical for the eating disorder classification task.   

\begin{figure}[h]
   \centering
   \includegraphics[scale = 0.5]{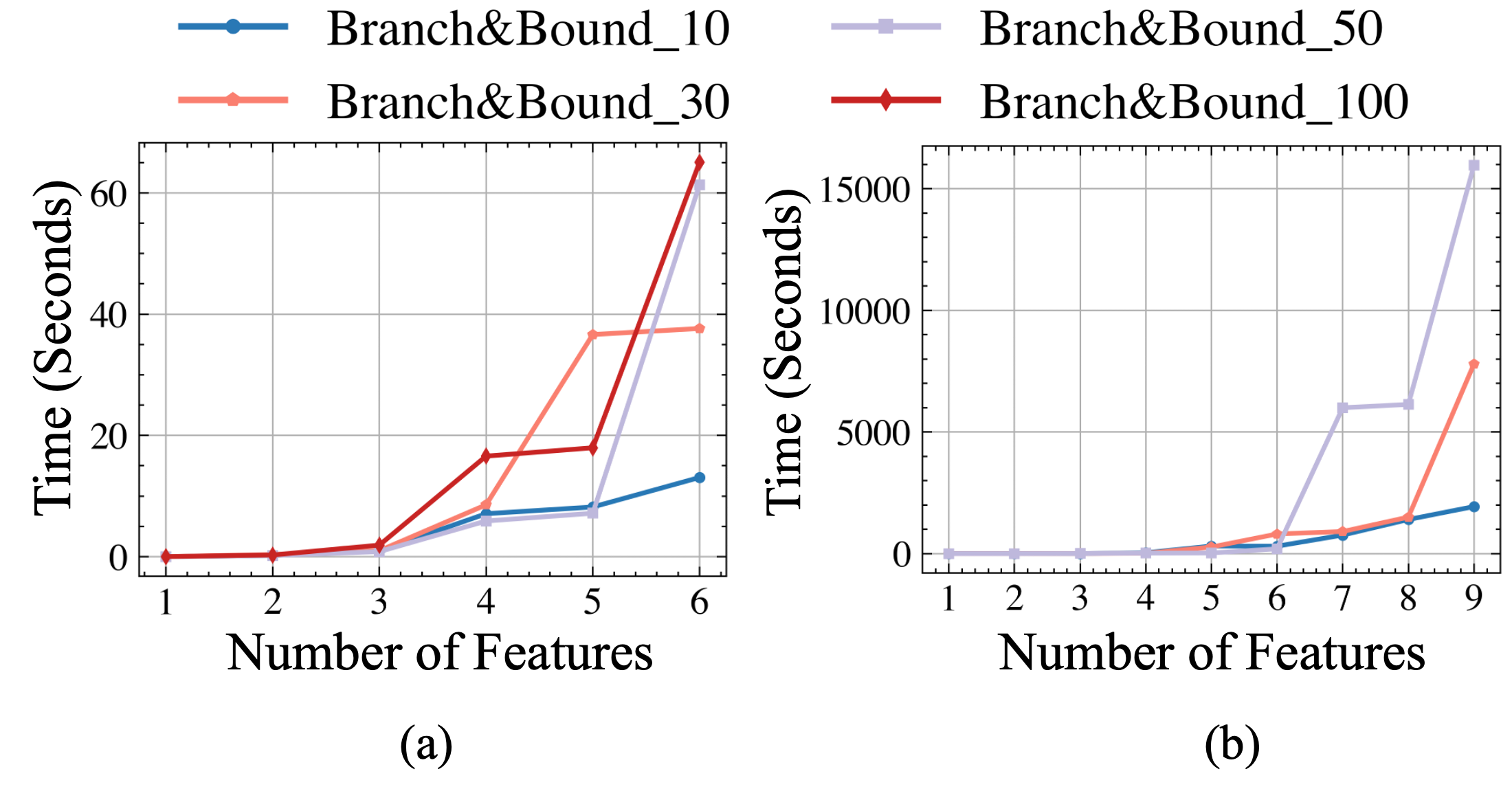}
    \caption{Scalability of hybrid greedy-deep learning method.}
    \label{fig:BBoundPerformance}
\end{figure}

\begin{figure}[h]
   \centering
   \includegraphics[scale = 0.5]{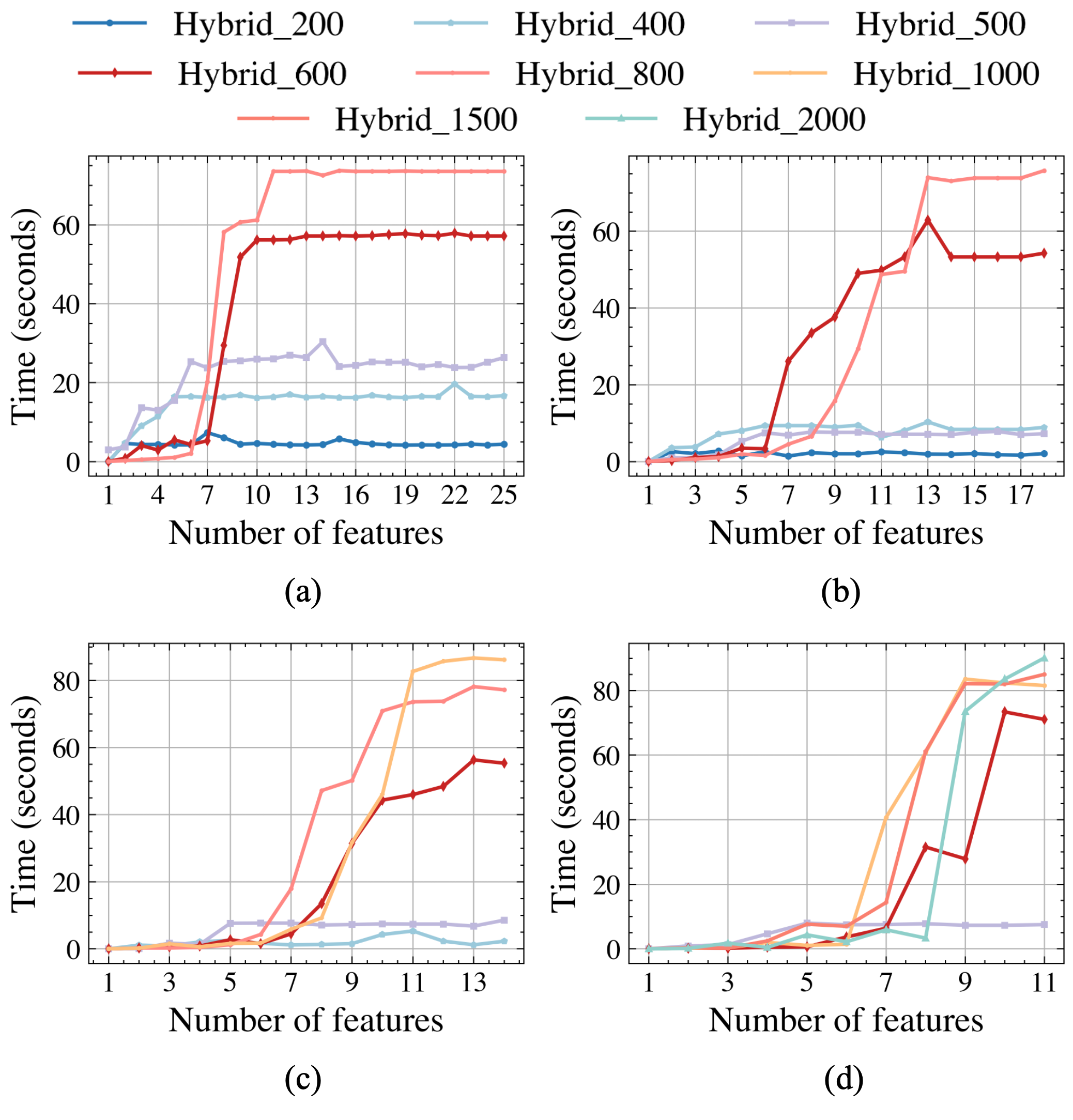}
    \caption{Scalability of hybrid greedy-deep learning method.}
   \label{fig:HybridPerformance}
\end{figure}

Figure \ref{fig:HybridPerformance} presents the scalability of the hybrid method. We diversified the hybrid method by varying the number of samples, i.g., $Hybrid_{200}$ denotes that the hybrid method is applied to the data of 200 samples. Furthermore, we vary the information gain threshold in a couple of steps, and select the features that their information gain scores are above the threshold as illustrated in Figure \ref{fig:HybridPerformance}.a to Figure \ref{fig:HybridPerformance}.d. In Figure \ref{fig:HybridPerformance}.a, 25 features are eligible since their information gain scores are above the threshold. Figure \ref{fig:HybridPerformance}.b presents the results with 18 features because the threshold is bigger than Figure \ref{fig:HybridPerformance}.a. Considering Figure \ref{fig:HybridPerformance}.a, the execution time is steady for $Hybrid_{200}$ and $Hybrid_{400}$. That's because the number of samples is relatively small; therefore, when the number of features increases, it does not affect the performance considerably. However, when the number of samples increases above $500$, the execution time is heavily impacted. For example, the execution time in $Hybrid_{500}$ exhibits a sharp jump when the number of features increases above $6$. Afterward, it gets stable. We see a big jump in the execution time in $Hybrid_{600}$ and $Hybrid_{800}$ when the number of features increases to $8$ and $9$, respectively. After the sharp jump, the execution time becomes stable in the three methods. The jump in the execution time verifies that at a certain point (when the number of features increases to a certain number), the search process looks into an extensive number of solutions and computes their accuracy to find the best solution. However, the sudden jump becomes stable since the hybrid method successfully terminates the search by checking the termination condition so that many non-promising solutions are ignored.

In Figure \ref{fig:HybridPerformance}.c, there are $14$ features because we set the information gain threshold to a large number. We assess the scalability of the hybrid method with large sample sets, i.e., $\{400,500,600,800,1000\}$. Clearly, the big jump in the execution time occurs when the number of features rises to between $7$ and $10$. Furthermore, we observe that when the number of initial features in the feature selection process is not over $15$, the execution time does not exceed $90$ seconds. In Figure \ref{fig:HybridPerformance}.d, we assess the hybrid method by applying the data with the following sample sizes: $\{500,800,1000,1500,2000\}$. We record the execution time for each step while the number of features increases to $11$. The big jump occurs when the size of the feature set is between $6$ and $9$. The execution time does not exceed $90$ seconds, even when the number of data samples gets bigger. That's because the number of features is restricted, and the hybrid method is scalable that it is able to perform the feature selection in a quick response time.

\section{Conclusion}
\label{Sec:Conclusion}
We proposed ED-Filter to pre-process the high-dimensional Twitter eating disorder data and to detect the reliable features. The proposed method reduces the Twitter data dimensions, which improves the classification accuracy of eating disorder data. ED-Filter adopts an iterative filtering approach to perform an informed search on the data space and to find the most relevant dimensions for the classification task. ED-Filter efficiently skips unnecessary computations by setting up a termination condition. To further improve the filtering performance, specifically for dynamically produced data on Twitter, we proposed a hybrid greedy-based deep learning approach that effectively computes the sub-optimal feature subset from the data source. Finally, our experimental results verify the effectiveness and efficiency of the proposed methods. The results prove that the hybrid approach is a practical way of Twitter data pre-processing for the eating disorder classification when the data is produced dynamically and modified frequently. The effectiveness results verify that the hybrid technique is comparable to its baseline and greedy versions. We have selected Twitter as it is an established benchmark for eating disorder studies, and many ED advocates and their networks are deeply active on this platform. However, it is worth considering other social media platforms to analyze the behavior of ED advocates and develop early intervention models for them. Moreover, analyzing the emojis and images and combining their meaning with the textual messages would be beneficial to improve the intervention task. These are intriguing
research directions that we are going to focus in the future.

\vspace{-0.3cm}

\section*{Acknowledgment}
This work is supported by Medical Research Future Fund (Grant Number: MRFF APP1179321) and the Researchers Supporting Project Number (RSPD2025R681) King Saud University, Riyadh, Saudi Arabia.

\section*{Author contributions}
M.N., A.T., and F.X. conceived and designed the research; M.N. and S.S. collected and processed the data; Z.C., O.A., and S.F.R. performed the analysis; all authors wrote and/or revised the paper. 

\section*{Ethics statement}
Our data collection meets the ethical standards and was approved by Swinburne University Human Research Ethics Committee (SUHREC) with the following reference 20190402-1922. Personal information was removed and anonymized.

\section*{Code availability statement}
The implementation code is available at GitHub repository: https://github.com/CapstonProjectsMehdi/ED-Filter-Project  

\section*{Data availability statement}
Data is available upon request.

\section*{Competing interests}
Feng Xia is an Associate Editor of Artificial Intelligence Review.

\bibliographystyle{unsrt}
\bibliography{sn-bibliography}

\end{document}